%% file: main_snp.tex
\documentclass[conference,compsoc]{IEEEtran}

\ifCLASSOPTIONcompsoc
  \usepackage[nocompress]{cite}
\else
  \usepackage{cite}
\fi

\ifCLASSINFOpdf
\else
\fi

\ifCLASSOPTIONcompsoc
 \usepackage[caption=false,font=footnotesize,labelfont=sf,textfont=sf,position=top]{subfig}
\else
 \usepackage[caption=false,font=footnotesize]{subfig}
\fi

\usepackage{fixltx2e}

\usepackage{stfloats}

\usepackage{url}

\input{math_commands.tex}

\usepackage{booktabs}
\usepackage{subcaption}
\usepackage{url}
\usepackage{amsmath,amsthm,amsfonts}
\usepackage{tabularx}
\usepackage{xcolor}
\usepackage[short,nocomma]{optidef}
\usepackage{hyperref}
\usepackage{todonotes}
\usepackage{thmtools, thm-restate}
\usepackage{multirow}
\usepackage{multicol}
\usepackage{dsfont}
\usepackage{xspace}
\usepackage{etoolbox}
\usepackage{float}
\usepackage{algorithm}
\usepackage{algpseudocode}
\usepackage{enumitem}

\AfterEndEnvironment{tabular}{\vspace{1em}}
\makeatletter
\newtheoremstyle{mytheorem}%
  {3pt}%
  {3pt}%
  {\itshape}%
  {}%
  {\scshape}%
  {.}%
  {.5em}%
  {\thmname{#1}\thmnumber{\@ifnotempty{#1}{ }#2}%
   \thmnote{ {\the\thm@notefont(#3)}}}%
\makeatother
\theoremstyle{mytheorem}

\newcommand{\ztitle}{PEANUT: Perturbations by Eigenvector Alignment for Attacking Graph Neural Networks Under Topology-Driven Message Passing}

\newcommand{\MethodName}{\text{PEA}\xspace}  %
\newcommand{\MethodDesc}{\emph{\underline{P}erturbation by \underline{E}igenvector \underline{A}lignment}\xspace}

\newcommand{\norm}[1]{\left\lVert#1\right\rVert}
\newcommand{\tr}{\rm{tr}}

\def\bLambda{{\boldsymbol{\Lambda}}}
\def\bTheta{{\boldsymbol{\Theta}}}
\def\bDelta{{\boldsymbol{\Delta}}}

\def\nv{{n_v}}

\newcommand{\MLP}{\text{MLP}\xspace}
\newcommand{\SGC}{\text{SGC}\xspace}
\newcommand{\GIN}{\text{GIN}\xspace}
\newcommand{\GCN}{\text{GCN}\xspace}
\newcommand{\SAGE}{\text{SAGE}\xspace}
\newcommand{\rand}{\text{Rand}\xspace}

\renewcommand{\paragraph}{\subsubsection*}
\newcommand{\citep}{\cite}
\newcommand{\citet}{\cite}

\begin{document}
\title{\ztitle}

\IEEEoverridecommandlockouts

\author{
\IEEEauthorblockN{Bhavya Kohli\thanks{This work is a preprint}, Biplab Sikdar}
\IEEEauthorblockA{National University of Singapore, Singapore\\bhavyakohli@u.nus.edu\\ bsikdar@nus.edu.sg}
}

\maketitle

\begin{abstract}
Message Passing Neural Networks (MPNNs) have achieved strong performance on tasks involving relational data. However, small perturbations to graph structure can significantly alter their outputs, raising concerns about their robustness in real-world deployment in security critical environments. 
In this work, we study a core vulnerability in MPNNs that explicitly consume graph topology via the adjacency matrix or Laplacian as part of their message passing mechanism. 
We show that this design choice exposes an extremely vulnerable attack surface, with significant effects even under minimal perturbation. 
Building on this observation, we propose PEA, a simple, gradient-free, black-box injection attack that requires only a single query to the target model, capitalizing on this vulnerability by constructing a perturbation aligned with a specific significant eigenvector to induce large output deviations. 
Unlike graph modification attacks, PEA operates under the realistic assumption that adversaries cannot alter the original graph and are limited to injecting new nodes at inference time.
PEA requires no iterative optimization, parameter learning, or surrogate models---which require additional training and remain susceptible to differences in model priors and generalization capabilities---thereby avoiding significant computational overhead and the associated transferability challenges. 
We evaluate PEA on popular benchmark datasets across three graph learning tasks, showing consistent performance degradation under realistic attack constraints despite its simplicity. 
Our results reveal a fundamental security weakness in topology-driven message passing architectures and urge an implementation shift, as the worst effects of such attacks can be substantially mitigated through appropriate input filtering.

\end{abstract}

\IEEEpeerreviewmaketitle

\section{Introduction}\label{sec:intro}
Message Passing Neural Networks (MPNNs) have emerged as a central modeling paradigm for learning over relational and structured data across a wide range of applications, including molecular property prediction~\citep{gilmer2017neural,moleculenet}, physical simulation~\citep{sanchez2020learning}, traffic forecasting~\citep{jiang2022graph,li2017diffusion}, recommendation systems~\citep{ying2018graph}, and knowledge graph reasoning~\citep{schlichtkrull2018modeling}. As they are increasingly deployed in high-impact settings, their adversarial robustness is a practical concern. 
Their success largely stems from the message-passing paradigm, where node representations are iteratively updated by aggregating information from local neighborhoods. 
However, this same mechanism also exposes MPNNs to structural vulnerabilities: small, carefully crafted changes in the graph can propagate through the network and lead to significant performance degradation. Early studies on adversarial attacks against MPNNs primarily focus on graph-modification attacks, where the adversary perturbs existing edges or node features~\citep{zugner2020adversarial,zugner2019metattack,dai2018adversarial}. While effective in controlled settings, these attacks often rely on unrealistic assumptions about the attacker's capability, such as having direct access to an existing graph’s topology and being able to modify its attributes. 
In many real-world systems (e.g., social networks, citation graphs, recommendation systems), modifying existing users, items, or relationships is generally infeasible. 
In contrast, graph injection attacks provide a more practical threat model where the adversary introduces new nodes into the graph and connects them to existing nodes without altering the original graph structure~\citep{ju2023let,zou2021tdgia,wang2020scalable,sun2020adversarial}. 
Such attacks reflect more realistic scenarios, such as creating fake accounts or synthetic entities to influence downstream performance. %
In this work, we focus on the evasion setting, wherein the attacker can inject new nodes to manipulate predictions only at inference time, and does not have the capability to poison the graph at the training phase \citep{ju2023let,zou2021tdgia,dai2018adversarial}. This scenario is particularly relevant when retraining is costly or infrequent, and where adversaries can only interact with the system by introducing new entities at inference.

\paragraph{Vulnerability of adjacency-input architectures}
Many widely used MPNN architectures explicitly utilize the adjacency (or Laplacian) matrix \citep{fu2022p,wu2019simplifying,kipf2016semi} as message-passing operators. This design choice introduces a major architectural vulnerability: perturbations to the adjacency matrix directly alter the propagation pathways through which information flows, affecting the graph's receptive fields, aggregation neighborhoods, and spectral properties. This ultimately results in amplified, nonlocal changes in downstream representation. 
Unlike feature perturbations, which are often normalized or regularized, such adjacency perturbations modify the structure of the computation graph itself. %

Despite recent progress on graph injection attacks, most existing methods remain limited in both practicality and scope. A large fraction of prior work relies on per-graph iterative optimization or reinforcement learning to construct the injected nodes’ features and edges, which is computationally expensive and must be repeated for every target graph or attack instance \citep{liu2025query,tao2021gnia,wang2020scalable,sun2020adversarial}. Moreover, current injection attacks are predominantly evaluated on node-level tasks such as node classification, with limited evidence of effectiveness beyond this narrow setting \citep{ju2023let,tao2021gnia,sun2020adversarial}. 
Several recent approaches approximate the victim model by training surrogate models, which may be susceptible to having different priors and generalization capabilities, while also introducing additional time and resource overhead for their training. 
This makes the attack pipeline even more cumbersome in practice and limits its applicability in large-scale or real-time settings. %

\paragraph{Contributions}
To address the aforementioned drawbacks of previous works, we propose a black-box attack, namely 
\MethodDesc (\MethodName),
which works on the evasion setting. Our proposed method can be applied \emph{immediately} to a trained MPNN without any per-graph optimization or surrogate model training, only requiring the attacker to view the final node-level representations. Furthermore, it generalizes beyond node classification and can be directly applied to MPNNs trained for arbitrary graph-level objectives, including the largely underexplored graph regression setting. We summarize our contributions as follows:
\begin{itemize}[leftmargin=8mm]
    \item We identify a restricted black-box attack surface induced by MPNN architectures that explicitly utilize graph topology matrices for message passing.
    \item We propose a simple, gradient and surrogate-free, restricted black-box attack that injects virtual nodes to maximize differences between clean and perturbed graph representations.
    \item We provide the first known systematic evaluation of such attacks on graph-level regression tasks, demonstrating that even small perturbations can meaningfully degrade performance on graph regression benchmarks.
    \item We evaluate this attack surface over four graph tasks spanning both node-level and graph-level objectives across multiple MPNN architectures, each requiring minimal to no modifications in the attack pipeline, demonstrating the ease of exploiting this vulnerability.
\end{itemize}
\begin{figure*}
    \centering
    \includegraphics[width=\linewidth]{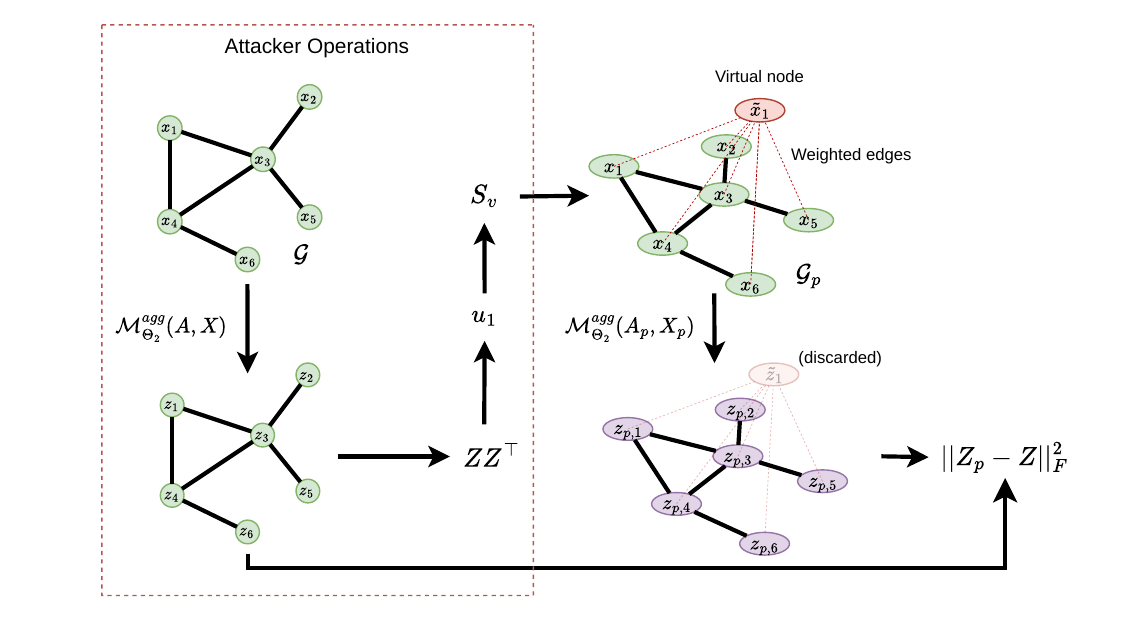}
    \caption{\MethodName when used on a graph-level task. The attacker queries the model once to obtain the (clean) node-level embeddings, using them to create the perturbation $\rmS_v$ using the eigenvector $\rvu_1$ of $\rmZ\rmZ\tran$ which induces a high $\gL(\rmS_v)$.}
    \label{fig:method}
\end{figure*} %

\subsection{Threat Model}
We consider a \emph{restricted black-box} threat model that works at the inference phase (evasion attack). 
The attacker has no access to model parameters, gradients, or training data but can observe only the final node-level representations during inference (e.g., output logits or log-probabilities for a node classifier, node representations prior to graph-pooling operations for graph-level tasks). 
This setting reflects modular deployment pipelines in which node-level embeddings are utilized by downstream components, while the underlying components themselves remain black boxes. 
The attacker is allowed to perturb the input graph structure via node injection (``virtual nodes") subject to a perturbation budget. 
The goal of the attacker is to induce large deviations in the generated embeddings by constructing perturbed graphs that maximize a discrepancy measure between clean and attacked graph representations.

\section{Related works}\label{sec:related}
\paragraph{Structural Attacks on MPNNs}
Adversarial attacks on MPNNs frequently target the graph structure by modifying the adjacency matrix through edge additions or deletions. Early white-box methods assume full access to model parameters and gradients; e.g., Nettack \citep{zugner2018adversarial} performs targeted perturbations of edges and features to induce misclassifications under structural constraints. Meta-learning-based poisoning attacks optimize the adjacency in a bi-level formulation to maximally degrade downstream performance~\citep{zugner2019metattack}. Gradient-based topology attacks, such as Fast Gradient Attack  (FGA)~\citep{chen2018fast}, relax discrete edges into continuous variables and greedily select perturbations via gradients. 
These approaches demonstrate that small structural changes can significantly impact predictions made by these models but rely on strong white-box assumptions and work on the graph modification that are unrealistic in many deployment settings. 
On the other hand, black-box, or \emph{restricted} black-box approaches such as GF-Attack~\citep{chang2020gfattack}, work under more feasible attack assumptions where attackers do not observe model parameters, gradients, or training data but may assume access to restricted internal representations or confidence scores.
Alternative black-box settings with no model queries have also been explored by optimizing surrogate spectral objectives that aim to maximally alter the implicit graph filters induced by MPNNs~\citep{xuxuan2020queryfree}. A few more classical white-box and recent black-box methods are summarized in~\citep{sun2022adversarialsurvey,jin2021adversarialsurvey,chen2020survey}.

\paragraph{Graph Injection Attacks}
A growing body of work studies graph injection attacks, where adversaries introduce malicious nodes to degrade model performance without modifying existing graph structure. The Node Injection Poisoning Attack (NIPA) \citep{sun2020adversarial} injects nodes during training using deep reinforcement learning to sequentially optimize edges and labels under a white-box setting. Due to its high compute requirements, it is considered not scalable for large-scale datasets \citep{zou2021tdgia}. 
Under the black-box setting, Topological Defective Graph Injection Attack (TDGIA) \citep{zou2021tdgia} offers a scalable inference-time injection attack that relies on iterative optimization of injected node features and connectivity; however, it relies on training surrogate models for its attack mechanism.
To maintain the unnoticeability of injection attacks, AGIA \citet{chen2022understanding} was proposed as a differentiable realization of the graph homophily constraint, showing that constraining injected nodes to match local homophily improves unnoticeability while maintaining their respective attack success. 
An extremely limited scenario of a single-node injection attack is considered in \citet{tao2021gnia}, wherein the objective is to find the most damaging injected node by a carefully crafted optimization scheme. This is further extended as the Generalizable Node Injection Attack model (G-NIA), which can work in black-box settings to amortize the per-graph optimization cost, although it requires a generator model to be trained. 
The Gradient-free Graph Advantage Actor Critic model (G2A2C) \citep{ju2023let} uses reinforcement learning to discover injection policies through repeated interaction with the target model, incurring substantial query and training overhead in the process.
QUGIA \citep{liu2025query}, a Query-based and Unnoticeable Graph Injection
Attack, tries to avoid the need to train surrogate models; however, it still requires iterative sampling and distribution updates per target graph. 
Graph injection attacks have also been extended to graph-level classification under a hard-label black-box setting \citet{zhou2023hard}, building upon existing edge perturbing attacks, using query-based optimization to inject nodes that flip graph predictions. %

\paragraph{Graph-level tasks and regression}
Most prior work on structural attacks focuses on node classification, with some exploring the effects on graph-level classification. Graph-level regression tasks, such as molecular property prediction or physical system modeling, remain comparatively underexplored in adversarial literature. 
In contrast to prior black-box approaches that rely on learning-based policies or query-driven optimization, our method leverages final node-level representations to construct gradient-free structural perturbations. 
We demonstrate that maximizing the norm discrepancy between clean and perturbed graphs constitutes an effective attack objective for graph-level regression, revealing a previously unexplored vulnerability of architectures that explicitly consume graph topology matrices such as the adjacency matrix or graph Laplacian as inputs.
Furthermore, we empirically demonstrate that this sort of norm discrepancy also translates to drops in performance in classification-based tasks. %

\section{Preliminaries}\label{sec:prelim}
Let $\gG(\gV,\rmA,\rmX)$ be an undirected attributed graph with vertex set $\gV$ of size $N$, and adjacency matrix $\rmA\in\sR^{N\times N}$ with associated edge set $\gE$. 
We denote the node attributes of $\gG$ as $\rmX\in\sR^{N\times D}$, and following the notation in \citet{wu2019simplifying}, we denote the normalized adjacency matrix as $\rmS=\rmD^{-1/2}\rmA\rmD^{-1/2}$, where $\rmD$ is the diagonal degree matrix of $\rmA$, and $\rmD_{ii}=\sum_j\rmA_{ij}$. 

\paragraph{Message Passing Neural Networks}
We consider a message passing neural network (MPNN) with parameters $\bTheta$, denoted $\gM_\bTheta(\rmA, \rmX)$, which uses the graph adjacency matrix and node attributes as inputs to produce task-specific outputs. %
For node-level tasks, such as node classification with $\gC$ classes, the network $\gM_\bTheta$ outputs log-probabilities (or unnormalized logits) $\rmZ\in\sR^{N\times \gC}$. For graph-level tasks, we represent the model as a composition 
\begin{equation}
    \gM_\bTheta(A,X) = \gM^{out}_{\bTheta_1}({\textsc{Pool}} (\gM^{agg}_{\bTheta_2}(A,X))) \;,
\end{equation}
where $\gM^{agg}_{\bTheta_2}$ first computes node-level embeddings $\rmZ\in\sR^{N\times d}$ which are then pooled to produce a single graph-level representation. This pooled embedding is finally reduced by the readout network $\gM^{out}_{\bTheta_1}$ to either a vector of $\gC$ values (graph-level classification with $\gC$ classes), or a single value (graph-level regression). 
This follows the design flow commonly used in molecular and physical system applications~\citep{gilmer2017neural,moleculenet}.

\paragraph{Virtual-node perturbation} 
For a graph $\gG$, we define a \emph{virtual-node perturbation} as the adjacency matrix with $\nv$ injected virtual nodes with the real-virtual, virtual-real, and virtual-virtual node connections defined by matrices $\rmA_{r,v}$, $\rmA_{v,r}$, and $\rmA_{v,v}$ respectively. 
The adjacency matrix of the perturbed graph with $N_p=N+\nv$ nodes is defined as $\rmA_{p}\in\sR^{N_p\times N_p}$. 
To minimize the number of new edges added, we consider $\rmA_{v,v}=\boldsymbol{0}^{\nv\times\nv}$, and to maintain symmetry, we consider $\rmA_{r,v} = \rmA_{v,r}\tran = \rmA_v \in \sR^{N\times\nv}$. 
Thus, a perturbation is uniquely determined by $\rmA_v$. If we work with the normalized adjacency matrix $\rmS$, the perturbed matrix follows the same structure, with $\rmS_{v}$ instead of $\rmA_{v}$.
The perturbed matrix is thus defined as:
\begin{equation}\label{def:perturbed-adj}
    \rmA_{p} = 
    \begin{pmatrix}
        \rmA & \rmA_{v} \\
        \rmA_{v}\tran & \boldsymbol{0}
    \end{pmatrix}
\end{equation}

The feature matrix of the perturbed graph is denoted by 
$\rmX_{p} = \begin{pmatrix}\rmX&\tilde{\rmX}\end{pmatrix}\tran\in\sR^{N_p\times D}$. 
To simplify the closed-form expression in Theorem \ref{thm:theorem1}, we assign a vector of zeros to all virtual nodes by default.  We refer to the perturbed graph as $\gG_p$. In Section \ref{sec:results_nc_xtilde}, we also explore alternative ways of choosing $\tilde{\rmX}$, and its effects on the attack.

\paragraph{Simplified Graph Convolution Network (SGC)} 
The simplified linear version of a Graph Convolution Network has been shown to perform similarly, if not better, than the \GCN \citep{kipf2016semi} at tasks such as node classification and text classification \citep{wu2019simplifying}. 
Given a 2-layer \SGC, we denote $\rmZ$ as the node embeddings for an unperturbed graph, i.e., $\rmZ=\rmS^2\rmX\bTheta$. Additionally, we define the node embeddings for the real nodes of a perturbed graph as $\rmZ_{p}$, which is $\rmS_p^2\rmX_p\bTheta[:N,:]$, i.e., the first $N$ rows of $\rmS_p^2\rmX_p\bTheta$. 
The SGC represents the most degenerate linearized version of an MPNN.

\paragraph{Attack Efficacy and Constraints} 
We define the \emph{attack efficacy} (denoted by $\gL(\rmS_v)$) of perturbation $\rmS_v$ as the norm of the difference of node embeddings of the real nodes for the original and perturbed graphs, i.e., $\gL(\rmS_v)=\norm{\rmZ_{p}-\rmZ}_F^2$. To constrain the perturbation that maximizes $\gL$, we impose a norm constraint $\norm{\rmS_v}_F^2\leq \bDelta^2$. 
In the case where we perturb the adjacency matrix, this norm constraint effectively bounds the exact number of new edges being added if the perturbation ($\rmA_v$ in this case) was binary. 
Our main optimization objective, therefore, is as follows:
\begin{equation}\label{eqn:main_objective}
\begin{split}
    \max_{\rmS_v\in\sR^{N\times \nv}}\quad&\gL(\rmS_v)=\norm{\rmZ_{p}-\rmZ}_F^2\\
    \rm{s.t.}\qquad&\norm{\rmS_v}_F^2\leq \bDelta^2
\end{split}
\end{equation}

For an \SGC network, $\gL(\rmS_v)$ reduces to the following, assuming $\tilde{\rmX}=0$ (see Appendix \ref{appendix:sgc_equivalence} for the derivation):
\begin{equation}\label{eqn:sgc_attack}
    \gL(\rmS_v)=\norm{\rmZ_{p}-\rmZ}_F^2=\norm{\rmS_{v}\rmS_{v}\tran\rmX\bTheta}_F^2
\end{equation}

\paragraph{Attack Goal} 
For the class of MPNNs defined above, admitting a form of the adjacency matrix (e.g., $\rmA$, $\rmS$, or graph Laplacian $\rmL$, etc.) as the input introduces a structural vulnerability. 
Our goal is to inject \emph{virtual nodes} with carefully chosen edge weights to maximally modify the final node-level embeddings $\rmZ$, and thereby degrade performance on downstream tasks.
The attack is \emph{gradient-free} and requires no access to model parameters $\bTheta$. 
For node-level tasks, this makes the attack strictly black-box; for graph-level tasks, it relies on observing the intermediate output of $\gM^{agg}_{\bTheta_2}$, making it a \emph{restricted black-box} method overall.
As seen in our experiments in Section~\ref{sec:experiments}, this approach is effective across various commonly used architectures, demonstrating that even limited adjacency perturbations can induce substantial changes in graph-level outputs. 

\section{Methodology}\label{sec:method}
In this section, we define the base version of our method, which we will refer to as \textbf{\MethodName}. 
First, we define a white-box variant for the \SGC architecture in Theorem \ref{thm:theorem1} which requires knowledge of the model parameters $\bTheta$. We refer to this variant as \MethodName-W. 
Then, in Section \ref{sec:practical}, we propose the black-box formulation, which also significantly deteriorates MPNN performance. We follow this with a discussion on the practical considerations of generating perturbations in this manner, and comment on the differences in how we should interpret the results for three different graph-based tasks. The flow of \MethodName is summarized in Figure \ref{fig:method}, and in Algorithms \ref{alg:peanut_nc}-\ref{alg:peanut_gl} in Section \ref{sec:algorithmic desc}.\vspace{10pt}

Before proving the result in Theorem \ref{thm:theorem1}, we first need a key result:
\begin{restatable}[]{lemma}{lemmaone}\label{lemma:main_optimization}
    For a given real-valued $\rmZ\in\sR^{N\times d}$, and budget $\bDelta$, we define the following optimization objective:
    \begin{align}
    \begin{split}
        \max_{\rmB\in\sR^{N\times n}}\quad&\norm{\rmB\rmB\tran \rmZ}_F^2\\
        \rm{s.t.}\qquad&\norm{\rmB}_F^2 \leq \bDelta^2
    \end{split}
    \end{align}
    The solution for the above is $\rmB^* = \bDelta\cdot\rvu_1\rvv\tran$, where $\rvu_1$ is the dominant eigenvector (corresponding to the eigenvalue of largest magnitude) of $\rmZ\rmZ\tran$, and $\rvv$ is any unit-norm vector in $\sR^{n}$.
\end{restatable}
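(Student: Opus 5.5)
Let $\rmP=\rmB\rmB\tran$, which is symmetric positive semidefinite with $\tr(\rmP)=\norm{\rmB}_F^2\le\bDelta^2$. The plan is to bound $\norm{\rmP\rmZ}_F^2$ in terms of the spectrum of $\rmP$ and of $\rmM:=\rmZ\rmZ\tran$, show that the bound is attained by a rank-one $\rmP$, and then read off $\rmB$. Write the eigendecomposition
\[
\rmP=\sum_i \sigma_i^2\,\rvw_i\rvw_i\tran,
\]
where the $\sigma_i$ are the singular values of $\rmB$ and the $\rvw_i$ are its left singular vectors. Then $\norm{\rmP\rmZ}_F^2=\tr(\rmP^2\rmM)=\sum_i \sigma_i^4\,\rvw_i\tran\rmM\rvw_i$. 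Since $\rmM\succeq 0$, every eigenvalue is nonnegative, so the eigenvalue of largest magnitude is just $\lambda_1=\lambda_{\max}(\rmM)$, and $\rvw_i\tran\rmM\rvw_i\le\lambda_1$ for each $i$.

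\textbf{Upper bound.} The bound follows from
\[
\sum_i\sigma_i^4\,\rvw_i\tran\rmM\rvw_i\;\le\;\lambda_1\sum_i\sigma_i^4\;\le\;\lambda_1\Big(\sum_i\sigma_i^2\Big)^2\;\le\;\lambda_1\bDelta^4 .
\]
The middle inequality holds because the $\sigma_i^2$ are nonnegative, so the cross terms in $(\sum_i\sigma_i^2)^2$ are nonnegative.

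\textbf{Attainment.} Take $\rmB^*=\bDelta\,\rvu_1\rvv\tran$ with $\norm{\rvu_1}=\norm{\rvv}=1$. Then $\norm{\rmB^*}_F^2=\bDelta^2$, so the constraint holds, and $\rmB^*\rmB^{*\top}=\bDelta^2\rvu_1\rvu_1\tran$. This gives
\[
\norm{\rmB^*\rmB^{*\top}\rmZ}_F^2=\bDelta^4\,\rvu_1\tran\rmM\rvu_1=\bDelta^4\lambda_1,
\]
which matches the bound.

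\textbf{Characterizing all maximizers.} To justify the phrase ``the solution is'', I would check when the chain of inequalities is tight. The middle inequality is tight only if at most one $\sigma_i$ is nonzero, i.e.\ $\rmB$ has rank one, $\rmB=\sigma\,\rvw\rvv\tran$. The last inequality is tight only if $\sigma=\bDelta$. The first inequality is tight only if $\rvw$ lies in the top eigenspace of $\rmM$. Because $\rvv$ never enters the objective, it is an arbitrary unit vector. This gives exactly the stated family, up to the sign of $\rvu_1$ and the choice within the top eigenspace when $\lambda_1$ is repeated; I would note this non-uniqueness explicitly. The degenerate case $\rmZ=0$ is trivial, since then every feasible $\rmB$ is optimal.

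\textbf{Main obstacle.} The delicate step is the middle inequality $\sum_i\sigma_i^4\le(\sum_i\sigma_i^2)^2$. This is where rank one emerges, and it is what makes the maximizer a single eigenvector rather than a spread over several. Handling it through the singular value decomposition of $\rmB$, which diagonalizes $\rmP$, avoids any need for a Lagrangian or KKT argument on a nonconvex maximization problem. Such an argument would find only stationary points and would need extra work to certify global optimality.
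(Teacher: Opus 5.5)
Your proof is correct, and it reaches the same bound $\lambda_1\bDelta^4$ as the paper by diagonalizing the other matrix.

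\textbf{The paper's route.} The paper diagonalizes $\rmZ\rmZ\tran=\rmU\bLambda\rmU\tran$ and sets $\rmY=\rmU\tran\rmB\rmB\tran\rmU$. It then reads the objective as $\sum_{i,j}\lambda_i\ermY_{ij}^2$. This is a nonnegative weighting of the eigenvalues, with total weight $\norm{\rmY}_F^2=\norm{\rmB\rmB\tran}_F^2\le\norm{\rmB}_F^4\le\bDelta^4$. Finally it exhibits $\rmY^*=\bDelta^2\rve_1\rve_1\tran$ as attaining the bound.

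\textbf{Your route.} You diagonalize $\rmB\rmB\tran$ through the SVD of $\rmB$ and bound each Rayleigh quotient $\mathbf{w}_i\tran\rmZ\rmZ\tran\mathbf{w}_i$ by $\lambda_1$. The decisive inequality is literally the same in both arguments: the paper's $\norm{\rmB\rmB\tran}_F\le\norm{\rmB}_F^2$ is your $\sum_i\sigma_i^4\le(\sum_i\sigma_i^2)^2$.

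\textbf{What each buys.} Working in the singular basis of $\rmB$ makes the equality analysis immediate. Tightness of the shared inequality forces rank one. Tightness of the Rayleigh bound forces the left singular vector into the top eigenspace of $\rmZ\rmZ\tran$. The budget must also be saturated. You therefore obtain the full set of maximizers, including the non-uniqueness when $\lambda_1$ is repeated and the degenerate case $\rmZ=0$. The paper only argues informally that the weight should ``concentrate'' on $\lambda_1$, and it shows one optimizer without proving every maximizer has that form. In return, the paper's eigenbasis of $\rmZ\rmZ\tran$ makes the ``weighted average of eigenvalues'' picture explicit, which is the intuition behind aligning the perturbation with $\rvu_1$.
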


\begin{proof}
    The proof uses the trace-equivalent representation of the Frobenius norm, followed by the eigen-decomposition of $\rmZ\rmZ\tran$. See Appendix \ref{appendix:proof_lemma} for more details.
\end{proof}

\begin{restatable}[]{theorem}{thmone}\label{thm:theorem1}
    Given a trained 2-layer \SGC $\gM_\Theta$, and graph $\gG(\gV,\rmA,\rmX)$, the perturbation $\rmS_v$ that maximizes $\gL(\rmS_v)$ while satisfying $\norm{\rmS_v}_F^2 \leq \bDelta^2$, is given by
    \begin{equation}
        \rmS_v^* = \bDelta\cdot\rvu_1\rvv
    \end{equation}
    where $\rvu_1$ is the dominant eigenvector of $\rmH\rmH\tran$ ($\rmH=\rmX\bTheta$), and $\rvv$ is any vector with unit norm.
\end{restatable}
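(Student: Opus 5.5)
The plan is to reduce Theorem \ref{thm:theorem1} directly to Lemma \ref{lemma:main_optimization}. First I will use the closed form of the attack efficacy for a 2-layer \SGC with zero virtual-node features, Equation \eqref{eqn:sgc_attack}:
\begin{equation*}
\gL(\rmS_v)=\norm{\rmZ_p-\rmZ}_F^2=\norm{\rmS_v\rmS_v\tran\rmX\bTheta}_F^2=\norm{\rmS_v\rmS_v\tran\rmH}_F^2 ,
\end{equation*}
with $\rmH=\rmX\bTheta$. To see why this holds, square the block matrix $\rmS_p$. Its top-left block is $\rmS^2+\rmS_v\rmS_v\tran$ and its top-right block is $\rmS\rmS_v$. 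Since $\tilde{\rmX}=0$, the top-right block is multiplied by zero rows. Hence the first $N$ rows of $\rmS_p^2\rmX_p\bTheta$ equal $(\rmS^2+\rmS_v\rmS_v\tran)\rmX\bTheta$, and subtracting $\rmZ=\rmS^2\rmX\bTheta$ leaves $\rmS_v\rmS_v\tran\rmH$. I take Appendix \ref{appendix:sgc_equivalence} as given for this step.

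The key point, which the theorem implicitly assumes, is that $\rmS_v$ is treated as a free variable in $\sR^{N\times\nv}$. The entries of $\rmS$ itself are held fixed rather than renormalized by the new degrees.

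With this, problem \eqref{eqn:main_objective} is exactly the problem of Lemma \ref{lemma:main_optimization}, with $\rmB=\rmS_v$, $n=\nv$, $\rmZ$ replaced by $\rmH$, and the same budget $\bDelta$. Invoking the lemma gives the maximizer $\rmS_v^*=\bDelta\,\rvu_1\rvv\tran$, where $\rvu_1$ is the dominant eigenvector of $\rmH\rmH\tran$ and $\rvv\in\sR^{\nv}$ has unit norm. This matches the statement, read with $\rvv$ as a row vector, i.e.\ $\rvv\tran$.

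For completeness I would also sketch why the lemma holds, since that is where the substance lies. Write $\norm{\rmB\rmB\tran\rmH}_F^2=\tr(\rmB\rmB\tran\rmH\rmH\tran\rmB\rmB\tran)$. Let $\rmP=\rmB\rmB\tran\succeq 0$, which satisfies $\tr(\rmP)=\norm{\rmB}_F^2\le\bDelta^2$, and let $\rmM=\rmH\rmH\tran=\sum_i\lambda_i\rvu_i\rvu_i\tran$ with $\lambda_i\ge0$. The objective becomes $\tr(\rmP^2\rmM)$. Diagonalize $\rmP=\sum_j\mu_j\rvw_j\rvw_j\tran$ with $\mu_j\ge 0$. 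Then
\begin{equation*}
\tr(\rmP^2\rmM)=\sum_j\mu_j^2\,\rvw_j\tran\rmM\rvw_j\le\lambda_1\sum_j\mu_j^2\le\lambda_1\Big(\sum_j\mu_j\Big)^2\le\lambda_1\bDelta^4 .
\end{equation*}
Equality in the last two steps requires $\rmP$ to be rank one, with trace $\bDelta^2$, along $\rvu_1$. This forces $\rmB\rmB\tran=\bDelta^2\rvu_1\rvu_1\tran$, i.e.\ $\rmB=\bDelta\rvu_1\rvv\tran$ for a unit $\rvv$.

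The main obstacle I anticipate is not this algebra but justifying the modeling reduction. Two issues need care. First, $\tilde{\rmX}=0$ is needed to kill the cross term $\rmS\rmS_v\tilde{\rmX}\bTheta$. Second, the normalization of $\rmS$ must be treated as unaffected by the injected edges. I would state both explicitly as the standing assumptions under which Equation \eqref{eqn:sgc_attack} holds. I would also note that uniqueness holds only up to the choice of $\rvv$, the sign of $\rvu_1$, and degeneracy of $\lambda_1$.
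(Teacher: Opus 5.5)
Your proposal is correct and follows the paper's proof: the paper also proves Theorem~\ref{thm:theorem1} by substituting the $\tilde{\rmX}=0$ reduction $\gL(\rmS_v)=\norm{\rmS_v\rmS_v\tran\rmH}_F^2$ from Equation~\ref{eqn:sgc_attack} into Lemma~\ref{lemma:main_optimization}, with $\rmB=\rmS_v$ and $\rmZ$ replaced by $\rmH$. Your sketch of the lemma diagonalizes $\rmB\rmB\tran$ and uses $\sum_j\mu_j^2\le(\sum_j\mu_j)^2\le\bDelta^4$; this is a minor variant of the paper's argument through $\rmY=\rmU\tran\rmB\rmB\tran\rmU$ and $\norm{\rmY}_F\le\bDelta^2$, and it has the advantage of deriving the equality case explicitly, where the paper only asserts it.
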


\begin{proof}
    Using the formulation of $\gL(\rmS_v)$ defined for the SGC in Equation \ref{eqn:sgc_attack}, we can apply Lemma \ref{lemma:main_optimization} to obtain $\rmS_v^*$.
\end{proof}

\paragraph{Working with Limited Information}\label{sec:yyt}
With this simple result, assuming there are no limitations, we can obtain a budget-constrained perturbation $\rmS_v$ that induces the \emph{maximum} change in the output $\rmZ=\gM_\Theta(\rmA,\rmX)$. We can, however, make this work in a black-box manner by using $\rmZ$ instead of $\rmH$. Considering the Cauchy-Schwarz inequality and the formulation in Equation \ref{eqn:sgc_attack},
\begin{align}\label{eqn:blackbox}
    \norm{\rmS_v\rmS_v\tran Z}_F^2 &= \norm{\rmS_v\rmS_v\tran (\rmS^2\rmX\bTheta)}_F^2 \\
    &\leq \norm{\rmS^2}_F^2\norm{\rmS_v\rmS_v\tran}_F^2\norm{\rmX\bTheta}_F^2\\
    &= \norm{\rmS^2}_F^2\cdot\left(\bDelta^4\norm{\rmH}_F^2\right)\label{eqn:blackbx_prefinal}\\
    &= \norm{\rmS^2}_F^2\cdot\gL(\rmS_v)\label{eqn:blackbxend}
\end{align}
Thus, if we assume the attacker only has access to the outputs $\rmZ$, maximizing the LHS using Lemma \ref{lemma:main_optimization} would also push up the $\gL(\rmS_v)$ term in RHS since $\norm{\rmS^2}_F^2$ is constant, albeit at a lesser effect (see Figure \ref{fig:SGC_normdiff}).

\paragraph{Choice of $\rvv$}
For the \SGC, the choice of $\rvv$ has no effect on the outputs since it gets reduced to $1$ when computing $\rmS_{v}\rmS_{v}\tran$. 
This will not happen in more complex MPNN architectures, which may have possible activations or biases inside them, making it a point of further optimization. 
To keep things simple, however, we only adopt two main methods to obtain $\rvv$: \textbf{(1)} sampling $\rvv\in\sR^{\nv}$ from the standard uniform distribution and \textbf{(2)} setting $\rvv=\mathds{1}_\nv$, i.e., a vector of ones. Both are followed by normalization to unit norm.

\paragraph{Norm Differences and its Effectiveness in Classification Tasks}
\MethodName provides a way to augment the graph topology matrix being considered to maximize the norm of the difference of node-level representations of a graph. 
The way \MethodName performs this does not have any control over \emph{where} all the norm difference goes;
for the node classification task, if the effect of perturbation on $\rmZ$ happens to somehow maintain the relative ranking order, the attack will have no effect on the actual predicted classes at test time. 
We will see in Section \ref{sec:results_nc}, however, that \MethodName does display its effectiveness despite this theoretical pitfall. 
We observe the same for the graph classification task; maximizing the norm difference before the readout model $\gM^{out}_{\bTheta_1}$ causes enough perturbation to cause significant misclassifications (Section \ref{sec:results_gc}). 
Compared to the classification datasets, the graph regression task, as expected, is much more sensitive to such an induced norm change.

\paragraph{Applicability to other MPNNs}
Consider a 2-layer \GIN model with the internal MLP being a Linear-ReLU-Linear (LRL) model with no bias. The forward-direction update is as follows:
\begin{align}
    \rmH^{(1)} &= \MLP^{(1)}\left(\left((1+\varepsilon^{(1)})\cdot\rmI + \rmA\right)\rmX\right) \\
    \rmZ &= \MLP^{(2)}\left(\left((1+\varepsilon^{(2)})\cdot\rmI + \rmA\right)\rmH^{(1)}\right) 
\end{align}
It has been observed \citet{xu2018powerfulgin} that when the parameters $\varepsilon^{(1)},\varepsilon^{(2)}$ are set to zero, the resulting \GIN (denoted \GIN-0) consistently outperforms \GIN with nonzero, and even trainable $\varepsilon$, in terms of test accuracy.

Writing this in the form of weights $\rmW_{1-4}$ for the two \MLP\!s, and denoting $\hat{\rmA} = \rmI + \rmA$
\begin{align}
    \rmZ &= \sigma\left( \hat{\rmA} { \rmH^{(1)} }\rmW_3\right)\rmW_4\\
    \rmZ &= \sigma\left( \hat{\rmA} \left({ \sigma(\hat{\rmA}\rmX\rmW_1) \rmW_2 }\right)\rmW_3\right)\rmW_4 
\end{align}

Under a na\"ive assumption of removing nonlinearities, this shows the same form as \SGC. We do not actually use this version of ``\GIN" since it is essentially just an \SGC; however, we want to note that the \emph{pathway} for message passing remains relatively similar. 
While our theoretical results are established for a specific architecture type, we hypothesize that exploiting the message-passing mechanism in the same way remains effective even if the specific model incorporates internally complex message passing mechanisms compared to the \SGC, and validate this empirically in Section \ref{sec:experiments}. We also test the effect of using a perturbation created using $\rvu_1$ as compared to using other heuristic methods instead of the eigenvector-based alignment, such as using the structural properties of the graph as compared to completely random perturbations.

\subsection{Practical Considerations}\label{sec:practical}
\subsubsection{Unnoticeability}
Similar to prior works, we constrain the amount of change $\rmA$ may experience by imposing a norm constraint on the perturbation $\rmA_v$. 
Although including non-zero $\tilde{\rmX}$ may cause the final norm difference to decrease (Equation \ref{eqn:xtilde_effect_on_norm} in Appendix \ref{appendix:sgc_equivalence}), injecting multiple nodes with zero features may be detectable easily, as seen with the Jaccard \citep{wu2019adversarialjaccard} defense method. 
In Section \ref{sec:results_nc_xtilde}, we explore choosing $\tilde{\rmX}$ in different ways and its corresponding effects on performance.
Regarding the injected edges being weighted, the core assumption of this attack is that the MPNN utilizes the graph topology matrix as is, and does not filter it beforehand.
This is common to see in popular library implementations of different architectures, which are then directly used in projects globally.
As MPNNs become increasingly used at scale \citep{geisler2021robustness}, maintaining robust implementations that enforce expected input types without relying on external security methods should become the norm. 
In some applications, however, weighted adjacency matrices may actually be expected as the input, and we see in the effect of \MethodName on the performance on traffic flow prediction, which is one such application, in Section \ref{sec:results_traffic}. 

\subsubsection{Perturbations With Negative Edge Weights}
In the above, we do not constrain positivity on the generated perturbation $\rmS_v$, and using eigenvector $\rvu_1$ means that there is no control over whether the perturbation will be positive or not. 
In principle, if an MPNN simply admits an adjacency matrix as input with no restrictions or checks on whether edge weights are non-negative, this would not be an issue. 
However, considering that these perturbations may be simply ``defended" by using one $\rm ReLU$ call on the adjacency matrix before the propagation steps, we assume as such, and try to minimize the number of perturbation connections that are zeroed out.
This makes the generated perturbations positive with values $\in[0,1]$. 
To this end, we first switch the \emph{direction} of $\rvu_1$ to have the maximum number of positive terms as follows: $\rvu_1$ is assigned a sign based on the number of positive and negative elements in it, using $\rvu_1 = \rvu_1\cdot\sign(\sum\sign(\rvu_1))$. 
With this operation, the eigenvalue ordering remains unaffected as the magnitude is unchanged. 
The method of obtaining $\rvv$ defined above ensures that the number of surviving elements does not experience further change after the $\bDelta\cdot\rvu_1\rvv\tran$ product. 

\subsubsection{A Discretized Version of PEA}
We can discretize the generated perturbation $\rmA_v$ by picking the top-k entries for each virtual node where $k=\max({\rm round}(\bDelta/\nv), 1)$. 
Since we have lesser means to distribute the available budget in this discrete case, we assign at least one edge for each virtual node, making the minimum possible perturbation equal to $2\nv$ ($\nv$ each for $\rmS_v$ and $\rmS_v\tran$).
This is usually only relevant on graph datasets with smaller graphs, where a low $r=0.01$ may correspond to a value less than $1$. 
The effective max budget $\bDelta'$ in this case is essentially $\nv\bDelta$. 
We refer to this version of \MethodName as \MethodName-D.

\subsection{Attack Pipeline}\label{sec:algorithmic desc}
In Algorithms \ref{alg:peanut_nc} and \ref{alg:peanut_gl}, we denote $K$ as the number of test graphs. The function $\textsc{DomEigvec}$ returns the dominant eigenvector (corresponding to the eigenvalue with maximum magnitude) of the input. In our experiments, since $\rmZ\rmZ\tran$ is a real symmetric matrix, we use the optimized eigenvector solver in PyTorch (\verb|torch.linalg.eigh|), however, the dominant eigenvector can also be found using power iterations or any alternative method. For other variants of \MethodName and the heuristic baseline methods, $\rmA_v$ is modified before the \emph{Insertion} step in these algorithms.

\begin{algorithm}[h]
\caption{\textsc{\MethodName} for NC with a single large graph.}
\label{alg:peanut_nc}
\begin{algorithmic}[1]
    \Require Graph $\gG(\rmV,\rmA,\rmX)$ with $N$ nodes, Budget ratio $r$
    \Ensure Perturbed Graph $\gG_p(\rmV_p,\rmA_p,\rmX_p)$

    \State $\rmZ \gets \gM_{\bTheta}\left(\rmA,\rmX\right)$ 
    \State $\rvu_1 \gets {\textsc{DomEigvec}}(\rmZ\rmZ\tran)$
    \State /* Aligning $\rvu_1$ along the direction that maintains maximum positive items */
    \State $\rvu_1 \gets \rvu_1\cdot\sign(\sum\sign(\rvu_1))$
    \State $\nv \gets {\rm floor}(rN), \quad \bDelta \gets {\rm floor}\left(\sqrt{r\cdot N \cdot \deg(\gG)}\right)$
    \State $\rvv \gets \frac{1}{\sqrt{\nv}} {\textsc{Ones}}(\nv)$ 
    \State $\rmA_v \gets {\rm ReLU}(\bDelta\cdot\rvu_1\rvv\tran)$
    \State $\tilde{\rmX} \gets \boldsymbol{0}^{\nv\times D}$
    \State $\rmA_p \gets \begin{pmatrix} \rmA & \rmA_v \\ \rmA_v\tran & \boldsymbol{0} \end{pmatrix}$, $\rmX_p \gets \begin{pmatrix} \rmX \\ \tilde{\rmX}\end{pmatrix}$ \Comment{Insertion}
    \State $\gG_p \gets (\gV \cup \{\nv\},\rmA_p, \rmX_p)$
\end{algorithmic}
\end{algorithm}

\begin{algorithm}[h]
\caption{\textsc{\MethodName} for Graph-level tasks.}
\label{alg:peanut_gl}
\begin{algorithmic}[1]
    \Require Test Set of Graphs $\gG^{(1)},\gG^{(2)},...,\gG^{(K)}$, Trained MPNN $\gM_\bTheta$. Budget ratio $r$, Number of virtual nodes $\nv$
    \Ensure Perturbed graphs $\gG^{(1)}_p,\gG^{(2)}_p,...,\gG^{(K)}_p$

    \For{$i\in[1,...,K]$} \Comment{Graphs can also be batched}
        \State $\rmZ \gets \gM^{(agg)}_{\bTheta_2}\left(\rmA^{(i)},\rmX^{(i)}\right)$ 
        \State $\rvu_1 \gets {\textsc{DomEigvec}}(\rmZ\rmZ\tran)$
        \State $\rvu_1 \gets \rvu_1\cdot\sign(\sum\sign(\rvu_1))$ 
        \Comment{Aligning $\rvu_1$ along the direction which maintains maximum positive items}
        \State $\rvv \gets \frac{1}{\sqrt{\nv}} {\textsc{Ones}}(\nv)$ 
        \State $\bDelta\gets {\rm floor}(\sqrt{r\cdot |\gE^{(i)}|})$
        \State $\rmA_v \gets {\rm ReLU}(\bDelta\cdot\rvu_1\rvv\tran)$
        \State $\tilde{\rmX} \gets \boldsymbol{0}^{\nv\times D}$
        \State $\rmA_p \gets \begin{pmatrix} \rmA & \rmA_v \\ \rmA_v\tran & \boldsymbol{0} \end{pmatrix}, \quad\rmX_p \gets \begin{pmatrix} \rmX \\ \tilde{\rmX}\end{pmatrix}$ \Comment{Insertion}
        \State $\gG^{(i)}_p \gets (\gV \cup \{\nv\},\rmA_p, \rmX_p)$
    \EndFor
\end{algorithmic}
\end{algorithm}

\section{Experiments}\label{sec:experiments}
In this section, we empirically evaluate \MethodName on three graph tasks---Node Classification (NC), Graph Classification (GC), and Graph Regression (GR)---and explore its effects on traffic forecasting as a node-level regression problem.

\input{tables/bigtable_sgc_vs_baselines}

\input{tables/bigtable_sgc_vs_heuristics}

\subsection{Experimental Setup}
\subsubsection{Datasets}
We evaluate the efficacy of \MethodName on NC using the three benchmark citation networks---Cora, Citeseer, and Pubmed \citep{citationdatasets}, and perform an in-depth comparison with baselines.
Then, we evaluate \MethodName on five benchmark regression datasets---ESOL, FreeSolv, and Lipophilicity \citep{moleculenet}, ZINC \citep{zincdataset}, and AQSOL \citep{aqsoldataset}, and on GC using four benchmark graph classification datasets from TUDataset \citep{tudataset}---MUTAG, PROTEINS, ENZYMES, and IMDB-BINARY---and two datasets from MoleculeNet \citep{moleculenet}---BBBP and BACE, for a total of six datasets spanning molecular property prediction, social networks, biophysics, and physiology. 
We summarize some key statistics of the NC datasets in Table \ref{tab:stats_nc}, and the graph-level task datasets in Table \ref{tab:stats_gcgr}.

\begin{table}[H]
\centering
\caption{Dataset summary for the NC task.} %
\begin{tabular}{lccc}
\toprule
Statistic $\backslash$ Dataset & Cora & Citeseer & Pubmed \\
\midrule
Number of Classes & 7 & 6 & 3 \\
Number of Nodes & 2708 & 3327 & 19717 \\
Number of Edges & 10556 & 9104 & 88648 \\
Avg. Degree & 4.90 & 3.74 & 5.50 \\
$\nv$ for $r=0.01$ & 27 & 33 & 197 \\
$\bDelta$ for $r=0.01$ & 132 & 124 & 1083 \\
\bottomrule
\end{tabular}
\label{tab:stats_nc}
\end{table}

\subsubsection{Metrics}
For the regression datasets, we report the test RMSE and MAE, and for the classification datasets, we compute the accuracy and macro-averaged F1 score (denoted by F1) on the test set, both as percents. 
Unless otherwise mentioned, all reported numbers have been averaged over $10$ runs, with the deviation error-bars being left out from the plots to avoid excess clutter.

\subsubsection{Attack Budgets}\label{sec:results_budgets}
For NC, we follow the standard budget allowances from literature \citet{sun2020adversarial}, keeping the number of virtual nodes $\nv=r\cdot|V|$ and $\bDelta=\sqrt{r\cdot|V|\cdot\deg(\gG)}$, where $\deg(\gG)$ is the average degree of the graph. 
For GC and GR, we consider the budget $\bDelta$ available to the attacker as a fraction $r$ of the number of edges of each individual graph, i.e., $\bDelta=\sqrt{r\cdot|\gE^{(i)}|}$ for graph $\gG^{(i)}$, and we vary $\nv\in[0,1,2,5,10]$.

\subsubsection{Baselines}\label{sec:reslts_baselines}
Since node injection attacks remain underexplored on the GC and GR tasks, we compare the performance of \MethodName with baselines on the well-studied and benchmarked NC task. We use TDGIA \citep{zou2021tdgia}, ATDGIA, a variant of TDGIA proposed by \citep{chen2022understanding}, AGIA \citep{chen2022understanding}, and the more recent G2A2C \citep{ju2023let} as four key baselines. 
TDGIA involves topological defective edge selection followed by feature learning using smooth adversarial optimization. AGIA requires gradient information (via surrogate) to learn the features and structures of the injected nodes. 
For a fair comparison, we impose our attack budget constraints defined in Section \ref{sec:results_budgets} to decide the number of injected nodes and the corresponding $\bDelta$.

In addition to these, we also include four heuristic baselines for comparing the effectiveness of \MethodName on the other tasks---\textbf{(1)} RAND, where the perturbations are sampled randomly from the Normal distribution; \textbf{(2)} BET, which uses the betweenness centrality of nodes; \textbf{(3)} DEG, which defines the perturbation using the average degrees of nodes; and \textbf{(4)} CENT, which uses the eigenvector centrality of the adjacency matrix $\rmA$. For the methods other than RAND, a single vector is computed with the corresponding property for each node (or the dominant eigenvector in the case of CENT) followed by normalization to unit norm. This vector is repeated $\nv$ times to create the complete perturbation matrix $\rmA_v$ (or $\rmS_v$), which is then scaled using $\bDelta$.

\subsubsection{Architectures}\label{sec:architectures}
For NC, we use the \SGC \citep{wu2019simplifying}, \GCN \citep{kipf2016semi}, \GIN \citep{xu2018powerfulgin}, and \SAGE \citep{hamilton2017sage} architectures for evaluating the effects of \MethodName-style perturbations across models with different propagation styles. 
The \GIN and \SAGE architectures usually do not admit the adjacency matrix as-is since they only deal with unattributed aggregation over neighbors. 
However, common implementations often use products of the form $\rmA\rmX$ as a parallelizable and efficient aggregation step to sum over neighbors, which is an exploitable vulnerability for \MethodName. 
If we assume direct access to $\rmS$ instead of only access to $\rmA$, it describes the true perturbation optimum from Section \ref{sec:method}. For the two architectures that utilize normalized adjacency matrices---\SGC and \GCN---we consider this scenario separately, denoting the layer as S-\SGC and S-\GCN respectively in Figures and Tables.
The architecture of choice for graph-level tasks will be a 2-layer \GIN. However, for a few simple graph-regression datasets, we also train a \GCN to confirm our performance trends. For all tasks, each architecture described consists of 2 graph convolution layers of the specified architecture.

\subsection{Results on Node Classification}\label{sec:results_nc}

\subsubsection{Comparison with Baselines}
In Table \ref{tab:SGC_baselines}., we compare the performance of \MethodName with the four baselines. 
Despite not being designed explicitly for node classification, and with the pitfalls discussed in Section \ref{sec:method} when dealing with classification-based tasks, \MethodName performs well across the board, and significantly outperforms the baselines on slightly higher budgets ($\sim 5\%$ injected nodes). 
In Table \ref{tab:SGC_heuristics}, we compare with the heuristic baselines, and in Figure \ref{fig:sgc_heuristics_nof1}, we visualize the accuracy along with the effect of perturbation $\norm{\rmZ_p - \rmZ}_F$, and note similar trends. Notably, even random injection is severely damaging under this threat model, with greater performance degradation than the non-heuristic baselines.
In Appendix \ref{appendix:heuristics}, we compare the performance of all heuristic baselines across all model architectures.

\subsubsection{Note on Adjacency Normalization}\label{sec:results_ssgc_sgcn}
If the \GCN or \SGC here do not normalize the adjacency matrix inside and expect the normalized adjacency matrix $\rmS$ as input, and we can perturb that directly, we observe an astonishing accuracy (F1) drop of over $59(80)\%$ on the Cora dataset, with Citeseer experiencing a $32(42)\%$ drop, and Pubmed experiencing a $43(63)\%$ drop, even with $r$ set as low as $0.001$. Perturbations added to the normalized adjacency matrix directly seem to cause it to go so far out of distribution that they collapse to a specific state.

\begin{figure}[h]
    \centering\includegraphics[width=\linewidth]{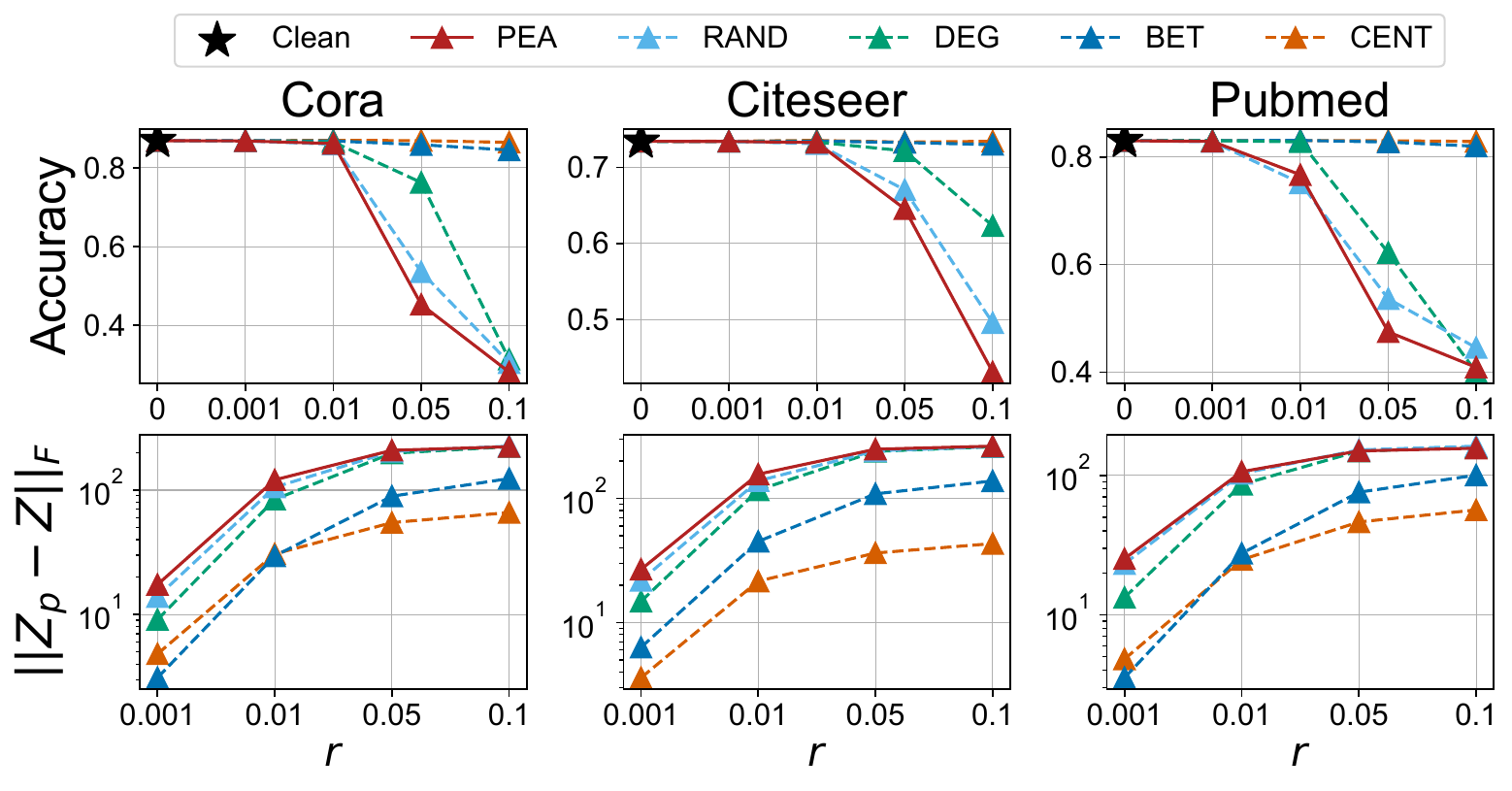}%
    \caption{Comparing \MethodName to the heuristic baselines across the three NC datasets, on the \SGC.}
    \label{fig:sgc_heuristics_nof1} \vspace{-1em}
\end{figure}

\subsubsection{\MethodName across MPNN Architectures}\label{sec:sage_discussion}
In Table \ref{tab:nc_architectures} (left), we apply \MethodName to the four different MPNN architectures over two values of $r$. 
Across all the NC experiments, \SAGE seems extremely resilient to \MethodName. Comparing it with the \GIN, the core difference beyond the degree of parameterization is the difference in aggregation styles---by default, \SAGE uses the mean aggregation and \GIN uses the sum aggregation. 
In Table \ref{tab:sage_ablation}, we explore the effect of modifying the internal mechanism of the \SAGE and \GIN models, switching their aggregation methods, and observing the effects. 
Remarkably, simply switching \GIN's aggregation to what \SAGE uses also carries the resilience with it. 
Internally, \SAGE computes the hidden state as $\rmH = \rmA\rmX$ followed by a row-wise division by ${\sum_{j=1}^N A_{ij}}$. In the binary adjacency case, would correspond to the mean-aggregated features of each node and its neighbors. This is then followed by two \MLP\!s: one acting on $\rmH$ and one acting on $\rmX$, the outputs of which are added together to give the final representation, completing the action of one layer. 
The equivalent operation in the \GIN is simply $\rmH = \rmA\rmX$, which corresponds to sum-aggregation over neighbors. Then, a single \MLP acts on $(1+\varepsilon)\rmX + \rmH$. 
With this in mind, the $\sum_{j=1}^N A_{ij}$ term in the denominator seems to weigh down the perturbed edges significantly more than the original binary edges since the \MethodName perturbations are continuous with the norm $\bDelta$ spread evenly across all nodes, thereby reducing its effect as a whole. However, as we will see in Section \ref{sec:results_nc_xtilde} below, the resilience of \SAGE is also broken by using non-zero injected features.

\begin{table}[h]
\caption{Ablation on \SAGE and \GIN, with a fixed $r=0.05$. Results on more values of $r$ are reported in Appendix \ref{appendix:sage_ablation}. The original pairing is \SAGE and \GIN with Mean and Sum aggregation respectively.}\label{tab:sage_ablation}%
\resizebox{\linewidth}{!}{
\begin{tabular}{llrrrr}
\toprule
\multirow{2}{*}{\vspace{-3pt}Dataset} & \multirow{2}{*}{\vspace{-3pt}Layer} & \multicolumn{2}{c}{Mean Aggregation} & \multicolumn{2}{c}{Sum Aggregation} \\\cmidrule(lr){3-4}\cmidrule(lr){5-6}
 &  & \multicolumn{1}{c}{Accuracy$\downarrow$} & \multicolumn{1}{c}{F1$\downarrow$} & \multicolumn{1}{c}{Accuracy$\downarrow$} & \multicolumn{1}{c}{F1$\downarrow$} \\\midrule
\multirow{2}{*}{Cora} & SAGE & 3.09±0.65 & 3.04±0.64 & 49.26±7.81 & 68.47±8.14 \\
 & GIN & 4.70±0.89 & 5.42±1.12 & 51.74±3.79 & 72.28±3.69 \\\midrule
\multirow{2}{*}{Citeseer} & SAGE & 0.65±0.35 & 0.66±0.57 & 49.26±7.81 & 68.47±8.14 \\
 & GIN & 1.79±1.07 & 1.78±1.28 & 34.56±14.10 & 44.31±15.50 \\\midrule
\multirow{2}{*}{Pubmed} & SAGE & 0.79±0.22 & 0.82±0.26 & 44.94±1.16 & 64.75±1.91 \\
 & GIN & 11.28±4.52 & 16.65±9.45 & 40.04±9.28 & 58.70±11.33 \\\bottomrule
\end{tabular}
}\vspace{0.5em}
\end{table}

\input{tables/nc_arch_xtilde_subfloat}

\subsubsection{Effects of Choosing $\tilde{\rmX}$ Differently}\label{sec:results_nc_xtilde}
As discussed in Section \ref{sec:practical}, injecting a large number of nodes, all with zero features, may be an easily detectable attack. Here, we explore five alternative ways of choosing $\tilde{\rmX}$---avg, rand, randn, randsamp, and randd---and their effect on attack performance. 
In the \emph{avg} mode, we choose $\tilde{\rmX}$ as the mean value of all features in the original graph; in the \emph{rand} and \emph{randn} modes, we sample from the standard uniform and normal distributions, followed by normalization to the average feature norm; and in the \emph{randsamp} mode, we randomly sample $\nv$ features from $\rmX$. The \emph{randd} mode is for datasets that consist of discrete features, like Cora and Citeseer. Features in this mode are created by creating a one-hot vector by randomly choosing $f_{avg}={\textsc{Mean}}(\sum_j X_{ij})$ feature indices and setting those to $1$. 
In Table \ref{tab:heuristics_vs_xtilde_mini}, we implement all six ways of choosing $\tilde{\rmX}$ for the Cora and Citeseer datasets, with a fixed budget of $r=0.05$. In most cases, including non-zero features on the injected virtual nodes causes further performance degradation; however, as mentioned earlier, this may not be guaranteed if $\rmX$ itself does not follow desirable properties (positivity, whether features are discrete or continuous, etc.), which may cause the norm in Equation \ref{eqn:xtilde_effect_on_norm} to decrease. From these results, the effect of (scaled) random uniform features on \SAGE is the most notable, demonstrating that with perturbing a graph, exploring the additional degree of freedom and optimizing for $\tilde{\rmX}$ instead of choosing it using heuristics may lead to even better results, across more complex architectures.

\subsubsection{Note on Output Norm Differences}
In Figure \ref{fig:SGC_normdiff}, we compare the perturbation effect $\gL(\rmS_v)$ for the NC datasets, supporting the result of Theorem \ref{thm:theorem1} and demonstrating that the black-box approximation in Equations \ref{eqn:blackbox}-\ref{eqn:blackbxend} does follow the expected trends of when accounting for $\norm{\rmS^2}_F$. 
\begin{figure}[H]
    \centering
    \includegraphics[width=1\linewidth]{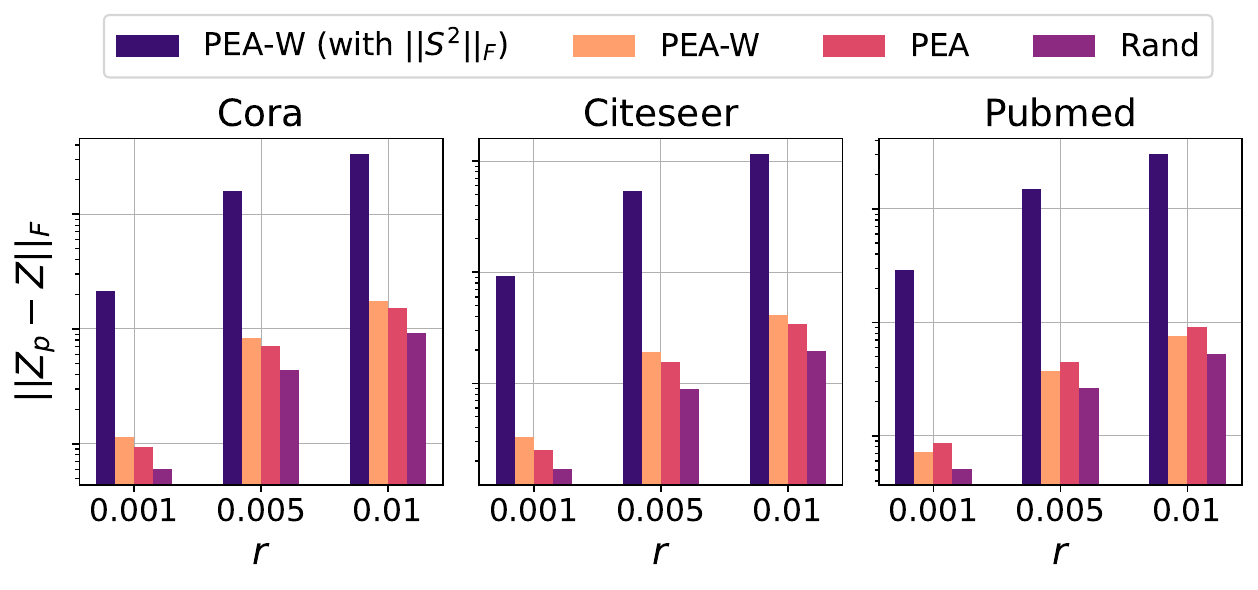}\vspace{-1em}
    \caption{Norm difference ($\gL(\rmS_v)$) between \MethodName-W, \MethodName, and a randomly chosen $\rmS_v$, illustrating how the white-box version compares with the black-box approximation and randomly chosen perturbations, using \SGC on the NC datasets.}
    \label{fig:SGC_normdiff}%
\end{figure}
In the Figure, we clearly see that the approximation and, consequently, maximizing the LHS in Equation \ref{eqn:blackbox} does indeed lead to a comparable, and in the case of Pubmed, better, $\gL(\rmS_v)$ compared to the \MethodName-W. In Figure \ref{fig:sgc_heuristics_nof1}, we also plot the norm difference $\norm{\rmZ_p - \rmZ}_F$ induced by \MethodName and the heuristic baselines, and note that \MethodName is consistently higher, despite all methods working on the same threat model (injecting weighted perturbations). In Appendix \ref{appendix:heuristics}, we plot these trends for all model types across the three datasets, and note that the models where we can perturb $\rmS$ directly suffer significantly larger $\gL(\rmS_v)$ compared to the others. As discussed in Section \ref{sec:results_ssgc_sgcn}, since this scenario directly reflects the one for which we have proven the theoretical optimum, this is expected.

\begin{table*}[ht]
\centering
\caption{Dataset summary for the GR and GC tasks.}%
\resizebox{\linewidth}{!}{
\begin{tabular}{lccccccccccc}
\toprule
Statistic $\backslash$ Dataset & FreeSolv & ESOL & Lipophilicity & ZINC & AQSOL & MUTAG & PROTEINS & ENZYMES & IMDB-BINARY & BBBP & BACE \\
\midrule
Number of Graphs & 641 & 1126 & 4200 & 12000 & 9833 & 188 & 1113 & 600 & 1000 & 2039 & 1513 \\
Number of Classes & - & - & - & - & - & 2 & 2 & 6 & 2 & 2 & 2 \\
Avg. Number of Nodes & 8.73 & 13.29 & 27.04 & 23.16 & 17.58 & 17.93 & 39.06 & 32.63 & 19.77 & 24.06 & 34.09 \\
Avg. Number of Edges & 25.51 & 40.63 & 86.04 & 72.99 & 53.38 & 57.52 & 184.69 & 156.91 & 212.84 & 75.97 & 107.81 \\
Avg. Degree & 2.83 & 2.98 & 3.18 & 3.14 & 2.98 & 3.19 & 4.73 & 4.86 & 9.89 & 3.13 & 3.17 \\
Avg. $\bDelta$ for $r=0.05$ & 1.28 & 2.03 & 4.30 & 3.65 & 2.67 & 2.88 & 9.23 & 7.85 & 10.64 & 3.80 & 5.39 \\
\bottomrule
\end{tabular}}\vspace{1em}
\label{tab:stats_gcgr}
\end{table*}

\subsubsection{Performance Against Filter-based Defenses}
In Table \ref{tab:filter_defense_cora_no_f1}, we test the efficacy of the Jaccard \citep{wu2019adversarialjaccard} and SVD \citep{entezari2020allsvd} graph defense methods on the Cora dataset, using the \SGC. Since Jaccard works on computing feature similarities, we choose $\tilde{\rmX}$ using the \emph{randsamp} mode, randomly sampling features from the graph. The numbers in the bracket for Jaccard refer to the Jaccard similarity threshold, which determines which edges are filtered out, and for SVD, they refer to the $k$-rank approximation for the adjacency matrix. 
Even with a different $\tilde{\rmX}$, \MethodName does not work much on the feature side of things, and as such, Jaccard is able to defend against it, although suffering a decline in performance on the original dataset. SVD is unable to filter out the perturbation despite working on the perturbed adjacency matrix directly, and suffers a much more significant performance drop if $k$ is set very low.

\subsection{Results on Graph-Level Tasks}\label{sec:results_gr}
Table \ref{tab:stats_gcgr} summarizes some key statistics for the five GR and six GC datasets.

\subsubsection{Graph Regression}

\begin{figure*}[htb]
    \centering\includegraphics[width=0.95\linewidth]{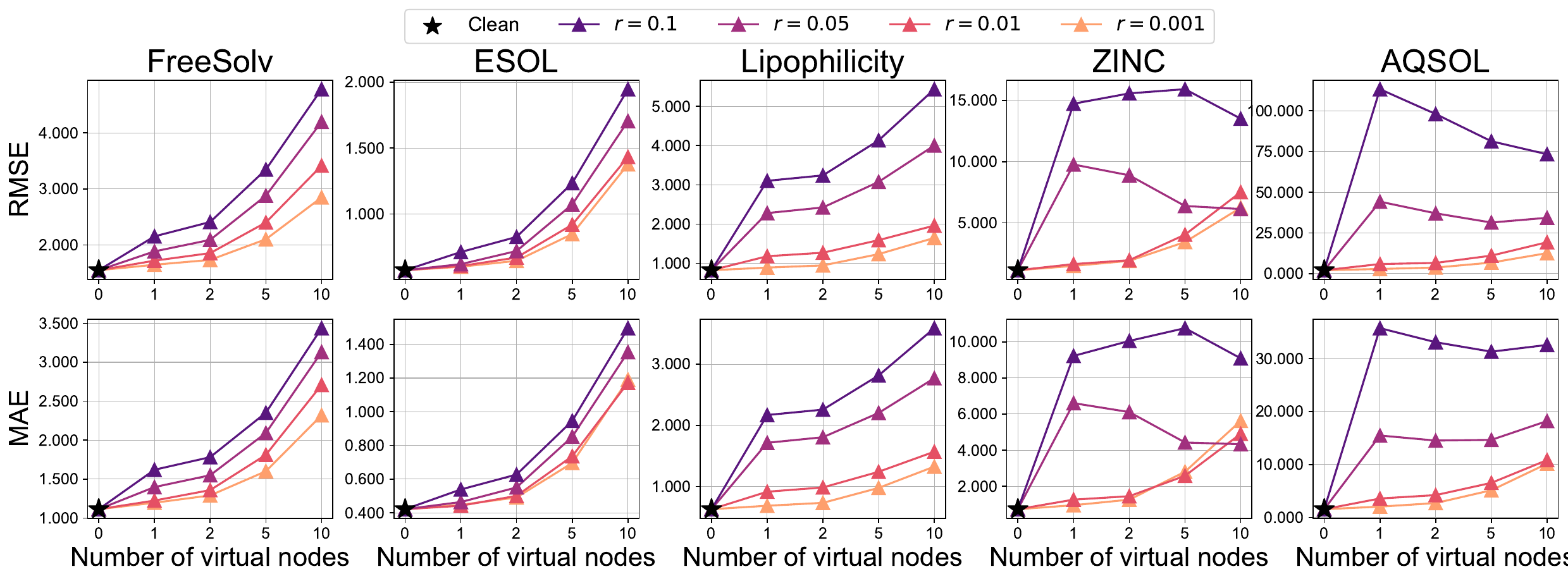}
    \caption{\GIN Regression performance vs number of virtual nodes on the five regression datasets.}
    \label{fig:GINA_vs_budget}
\end{figure*}

For the regression task, we implement two main model classes: \textbf{(1)} A simple 2-layer \GCN, and \textbf{(2)} A 2-layer \GIN. As mentioned in Section \ref{sec:architectures}, although our main architecture is \GIN, we train a \GCN on the relatively easier datasets to confirm our hypotheses on performance trends. We visualize the impact of \MethodName in Figures \ref{fig:GINA_vs_budget} and \ref{fig:GCNA_vs} as the number of injected nodes $\nv$ increases, plotting both the RMSE and MAE of the output predictions. 

In Figure \ref{fig:GCNA_vs}, we observe the expected degradation in regression performance when the number of injected nodes and perturbation budgets are increased for the \GCN. Despite having much better performance on the core task, in Figure \ref{fig:GINA_vs_budget}, we see the same trends for the \GIN, lending some empirical support to our hypothesis that exploiting the message-passing mechanism in the way described works even for architectures with different internal architectures (Section \ref{sec:practical}). 

\begin{figure}[H]
    \centering
    \includegraphics[width=\linewidth]{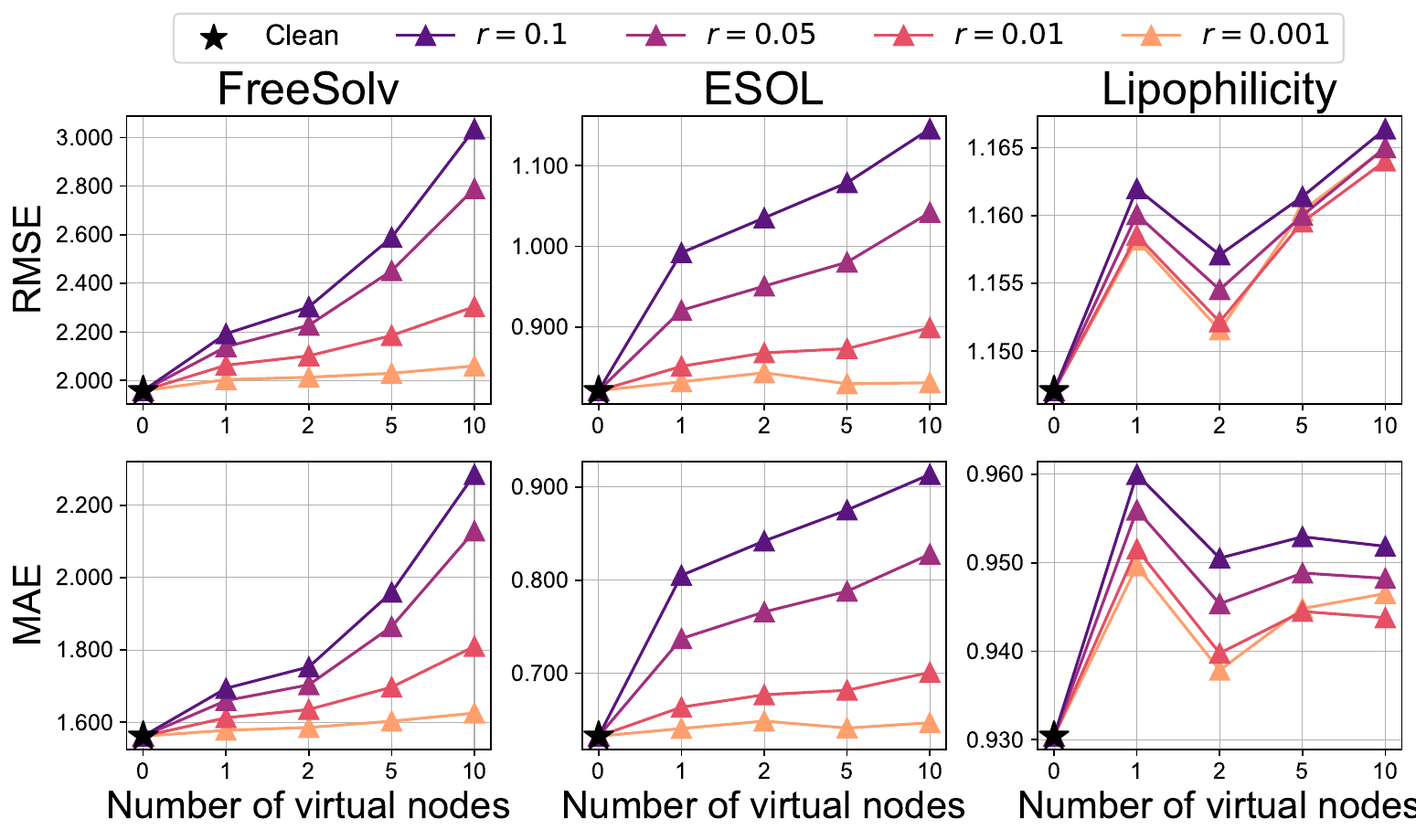}
    \caption{\GCN Regression performance vs number of virtual nodes on the MoleculeNet datasets.}
    \label{fig:GCNA_vs}
\end{figure}

\begin{figure*}[t]
    \subfloat{
        \includegraphics[width=0.95\linewidth]{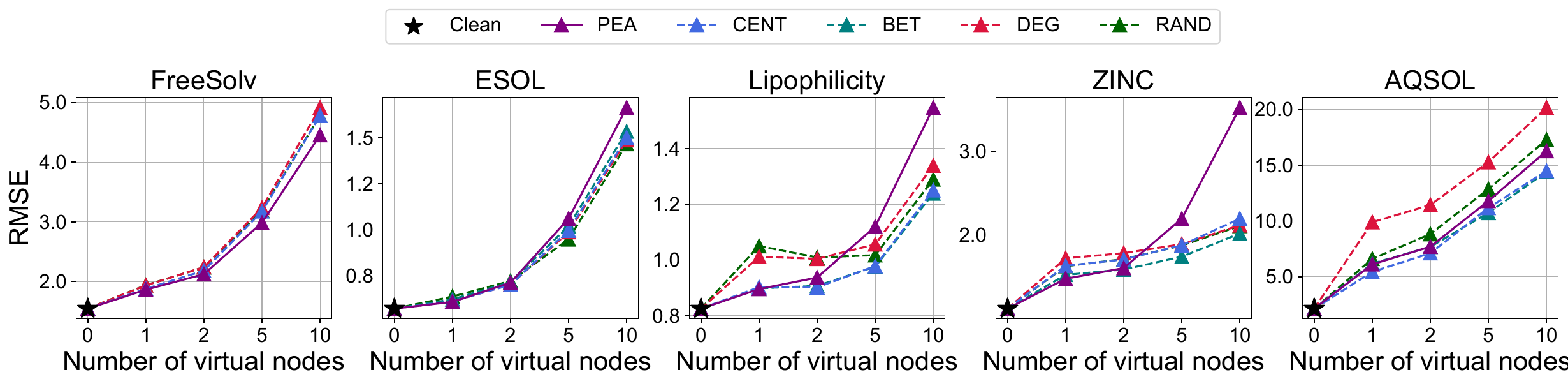}
    }
    \vfil
    \subfloat{
        \includegraphics[width=0.95\linewidth]{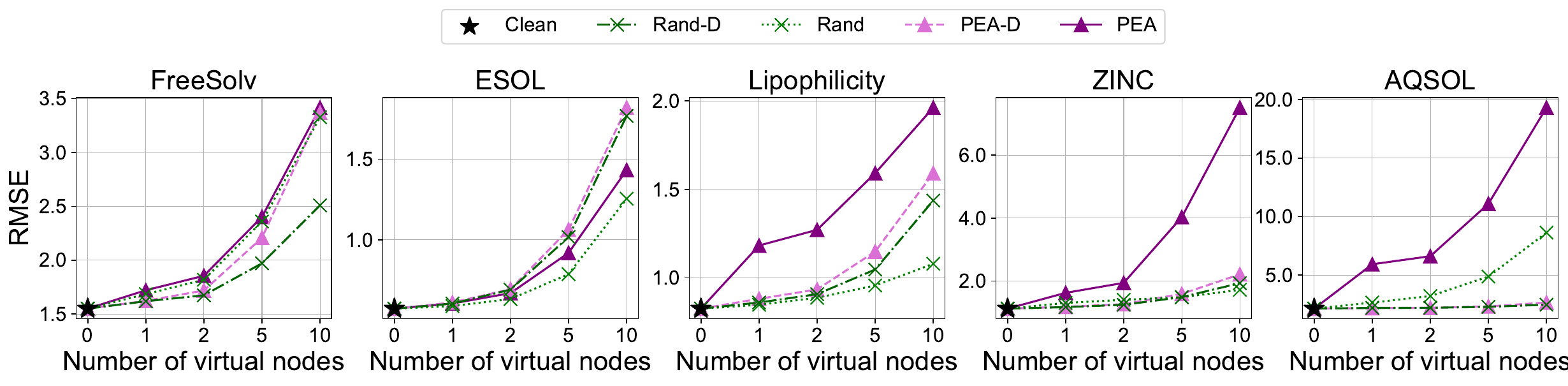}
    }
    \caption{\GIN regression performance across the five datasets after perturbation with \MethodName and the four heuristic baselines (Top); and with \MethodName, \MethodName-D, \rand and \rand-D (Bottom), for the same budget (normalized for the discrete variants).}\label{fig:gr_heuristics}
\end{figure*}

\subsubsection*{Performance vs heuristics}
In Table \ref{tab:gr_heuristics}, we compare \MethodName with the four heuristic baselines: RAND, DEG, BET, and CENT, with the number of injected virtual nodes fixed to $5$. \MethodName remains competitive with the baselines, with the best attack performance on three datasets.

In Figure \ref{fig:gr_heuristics} (Top), we perform this comparison across different values of $\nv$. We also compare with the discrete perturbation variants; in Figure \ref{fig:gr_heuristics} (Bottom), we compare our base method with three alternatives: the discrete variant \MethodName-D discussed in \ref{sec:practical}, \rand, and \rand-D, which is discretized in the same way as \MethodName-D.

\begin{table}[h]
\centering
\caption{Performance of \MethodName and the heuristic baselines on the GR datasets with $\nv=5$.}%
\begin{tabular}{lcccccc}
\toprule
Dataset & Clean & PEA & RAND & DEG & BET & CENT \\
\midrule
FreeSolv & 1.113 & 2.145 & 2.277 & 2.345 & 1.724 & \underline{2.356} \\
ESOL & 0.422 & \underline{0.858} & 0.767 & 0.801 & 0.848 & 0.806 \\
Lipophilicity & 0.629 & \underline{0.861} & 0.786 & 0.830 & 0.758 & 0.762 \\
ZINC & 0.729 & \underline{1.751} & 1.610 & 1.614 & 1.455 & 1.594 \\
AQSOL & 1.545 & 8.243 & 9.681 & \underline{11.505} & 8.031 & 9.243 \\
\bottomrule
\end{tabular}\vspace{-1.5em}
\label{tab:gr_heuristics}
\end{table}

\subsubsection{Graph Classification}\label{sec:results_gc}
For graph classification, we use the \GIN due to its significant performance over other architectures across all datasets. In Table \ref{tab:clean_acc_gc}, we report the accuracy and macro-averaged F1 scores on the clean datasets. 

\begin{table}[H]
\centering
\caption{Mean performance of the \GIN on GC. Statistics are computed over 10-Fold cross validation splits.}
\begin{tabular}{lcc}
\toprule
Datasets & Accuracy & Macro F1 \\
\midrule
MUTAG & $84.18${\scriptsize$\pm8.12$} & $81.10${\scriptsize$\pm10.75$} \\
PROTEINS & $75.11${\scriptsize$\pm2.43$} & $73.67${\scriptsize$\pm2.32$}  \\
ENZYMES & $44.00${\scriptsize$\pm8.10$} & $42.94${\scriptsize$\pm6.65$} \\
IMDB-BINARY & $71.30${\scriptsize$\pm3.80$} & $71.05${\scriptsize$\pm3.84$} \\
BBBP & $87.49${\scriptsize$\pm1.49$} & $81.24${\scriptsize$\pm3.43$} \\
BACE & $78.26${\scriptsize$\pm3.26$} & $77.93${\scriptsize$\pm3.39$}\\
\bottomrule
\end{tabular}
\label{tab:clean_acc_gc}
\end{table}

In Figure \ref{fig:GINA_gc}, similar to the GR task, we observe a drop in accuracy as we increase the number of injected nodes. In this task, we see that increasing the budget does not always lead to a further drop in performance (notably, in the IMDB-BINARY and MUTAG datasets). 
This occurs due to how \MethodName works on norms and not explicitly targeting a certain class, leading to cases where increasing the norm difference of a certain model output may very well place \emph{more} weight on the output logits (or probabilities) corresponding to the correct class, thus not affecting the model's prediction. 
The GC task is the farthest from our setting, not only including a feature aggregation step to generate a graph-level representation, but also performing classification on top, which suffers from the pitfall discussed, and thus we do not explore heuristic baselines for it.
Even in this case, however, we \emph{do} observe performance drops, albeit with some expected trend violations. In Appendix \ref{appendix:gc_additional} (Table \ref{fig:GINA_gc_f1}), we also report the effect of \MethodName on the F1 score as an additional metric, noting the same trends in performance degradation.

\begin{figure*}[htb]
    \centering
    \includegraphics[width=\linewidth]{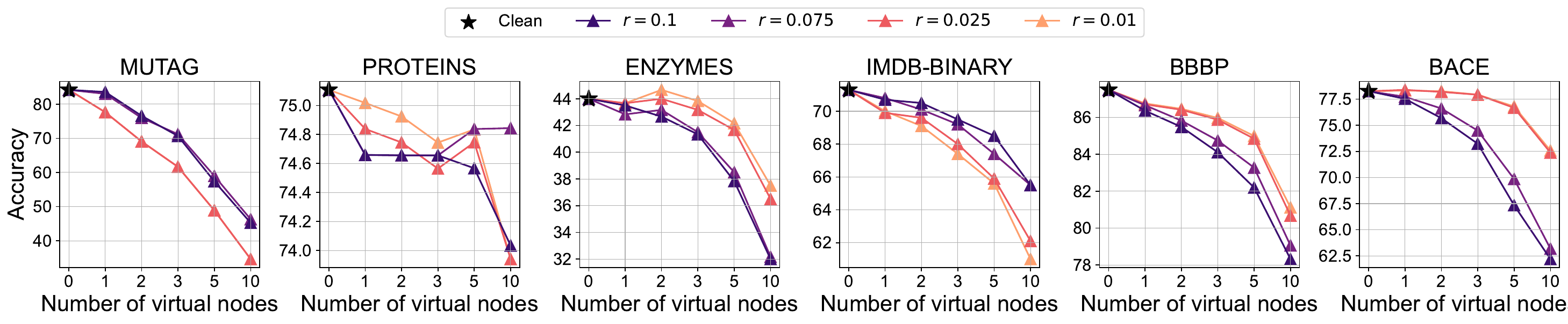}
    \caption{\GIN Classification accuracy vs number of virtual nodes on the six classification datasets. All reported numbers have been averaged over the 10-Fold CV.}\vspace{-1em}
    \label{fig:GINA_gc}
\end{figure*}

\subsection{Traffic Flow Prediction}\label{sec:results_traffic}
In this section, we apply \MethodName and the heuristic baselines to a task that expects a weighted adjacency matrix as the input---Traffic flow prediction---where a adjacency matrix is often constructed using inter-node distances using the thresholded Gaussian kernel \citep{shuman2013emerging}. 

Due to the difficulty of the task and the involvement of a temporal dimension, the relatively simple MPNNs we have considered do not perform well. Thus, we implement the STGCN \citep{yu2017spatio} on three datasets---PEMS-BAY (325 nodes) from \citep{li2017dcrnn}, and PEMSD7(M) (228 nodes) and PEMSD7(L) (1026 nodes) datasets from \citep{yu2017spatio}. 
For the metrics, we follow \citep{yu2017spatio} and report the MAE, MAPE (in \%), and RMSE, with a fixed forecast horizon of $1$\,hr ($12$ values). In Table \ref{tab:traffic}, although better than random perturbations, \MethodName does not always perform better than perturbations chosen using heuristics, with DEG working exceptionally well over all three datasets.

\begin{table}[ht]
\centering
\caption{Performance on three metrics on the three traffic flow prediction datasets. The numbers below the metric name denote the clean performance, and the numbers for each method denote the respective \textit{increases} (higher is better).}
\resizebox{\linewidth}{!}{
\begin{tabular}{cccccccc}
\toprule
Dataset & Metric & Budget  & PEA & RAND & DEG & BET & CENT \\
\midrule
\multirow[c]{9}{*}{PEMS-BAY} & \multirow[c]{3}{*}{\shortstack{MAE\\(1.790)}} & 0.050 & 0.282 & 0.174 & \underline{0.334} & 0.073 & 0.283 \\
 &  & 0.075 & 0.548 & 0.287 & \underline{0.688} & 0.121 & 0.635 \\
 &  & 0.100 & 0.833 & 0.456 & \underline{1.122} & 0.187 & 1.050 \\
\cmidrule{2-8}
 & \multirow[c]{3}{*}{\shortstack{MAPE\\(4.136)}} & 0.050 & 0.504 & 0.284 & \underline{0.575} & 0.237 & 0.504 \\
 &  & 0.075 & 1.042 & 0.496 & \underline{1.165} & 0.363 & 1.138 \\
 &  & 0.100 & 1.609 & 0.811 & \underline{1.936} & 0.517 & 1.888 \\
\cmidrule{2-8}
 & \multirow[c]{3}{*}{\shortstack{RMSE\\(4.144)}} & 0.050 & 0.046 & 0.032 & \underline{0.152} & -0.131 & 0.020 \\
 &  & 0.075 & 0.252 & 0.132 & \underline{0.590} & -0.070 & 0.501 \\
 &  & 0.100 & 0.510 & 0.288 & \underline{1.191} & 0.006 & 1.128 \\
\midrule
\multirow[c]{9}{*}{PEMSD7(M)} & \multirow[c]{3}{*}{\shortstack{MAE\\(3.237)}} & 0.050 & 1.246 & 0.981 & \underline{1.571} & 0.737 & 0.296 \\
 &  & 0.075 & \underline{3.330} & 1.926 & 3.130 & 1.308 & 0.445 \\
 &  & 0.100 & \underline{5.161} & 3.480 & 4.924 & 1.969 & 0.612 \\
\cmidrule{2-8}
 & \multirow[c]{3}{*}{\shortstack{MAPE\\(8.154)}} & 0.050 & 3.794 & 2.945 & \underline{4.109} & 1.539 & 0.573 \\
 &  & 0.075 & \underline{7.512} & 5.314 & 7.246 & 2.682 & 0.798 \\
 &  & 0.100 & \underline{10.563} & 8.387 & 10.269 & 3.968 & 1.049 \\
\cmidrule{2-8}
 & \multirow[c]{3}{*}{\shortstack{RMSE\\(5.934)}} & 0.050 & 1.120 & 0.838 & \underline{1.523} & 0.641 & 0.330 \\
 &  & 0.075 & 3.108 & 1.878 & \underline{3.208} & 1.269 & 0.565 \\
 &  & 0.100 & \underline{5.172} & 3.406 & 5.062 & 2.118 & 0.838 \\
\midrule
\multirow[c]{9}{*}{PEMSD7(L)} & \multirow[c]{3}{*}{\shortstack{MAE\\(3.494)}} & 0.050 & 1.253 & 2.512 & \underline{3.016} & 0.648 & 0.324 \\
 &  & 0.075 & 1.761 & 4.039 & \underline{4.706} & 1.329 & 0.464 \\
 &  & 0.100 & 2.283 & 5.213 & \underline{6.215} & 2.094 & 0.547 \\
\cmidrule{2-8}
 & \multirow[c]{3}{*}{\shortstack{MAPE\\(8.836)}} & 0.050 & 2.867 & 4.628 & \underline{6.058} & 1.330 & 0.629 \\
 &  & 0.075 & 3.422 & 8.003 & \underline{9.190} & 2.353 & 0.849 \\
 &  & 0.100 & 4.255 & 10.487 & \underline{11.722} & 3.474 & 0.986 \\
\cmidrule{2-8}
 & \multirow[c]{3}{*}{\shortstack{RMSE\\(6.308)}} & 0.050 & 1.321 & 2.521 & \underline{3.084} & 0.615 & 0.403 \\
 &  & 0.075 & 1.966 & 4.058 & \underline{4.961} & 1.329 & 0.651 \\
 &  & 0.100 & 2.575 & 5.346 & \underline{6.717} & 2.224 & 0.819 \\
\bottomrule
\end{tabular}
}\vspace{0.5em}
\label{tab:traffic}
\end{table}

\section{Conclusion}
In this work, we study graph injection attacks in a practical black-box setting with access to node-level embeddings and explicit budget constraints on injected node connections. 
Due to its formulation and lack of computational prerequisites, our method is designed to be immediately applicable to any deployed MPNN, without requiring surrogate model training, reinforcement learning, or per-graph iterative optimization. 
Across a range of architectures and tasks, including underexplored graph-level objectives such as graph regression, our approach consistently degrades model performance under tight injection budgets, while maintaining efficiency and ease of deployment.

Empirically, our results follow expected trends: attack effectiveness increases with budget in most cases, and models that rely more heavily on local neighborhood aggregation, especially if they explicitly consume graph topology matrices as-is, are more vulnerable to carefully crafted virtual node injections.
Compared to strong baselines 
our method achieves competitive, and even better performance under the same budget constraints while substantially reducing computational overhead and attack latency, avoiding the nontrivial setup and runtime costs associated with surrogate models or iterative optimizations, which limit their practicality in large-scale or time-sensitive scenarios. 
Our method closes much of this performance gap without these costs, making it better suited for real-world evasion settings. In future, we would like to explore ways to improve the performance of the discrete variants, and applying this to more complex MPNN architectures.

\bibliographystyle{IEEEtran}
\bibliography{bibfile}

\appendix

\subsection{Artifact Availability}
All code, data, and scripts required to reproduce our results are available at the anonymized repository: \url{https://anonymous.4open.science/r/PEANUT/}. The main instructions are included in the main README. This repo will be publicly released upon acceptance.

\subsection{Baseline Code Organization}
\begin{enumerate}
    \item For the AGIA, TDGIA, and ATDGIA baselines, we use the implementation from the UGBA authors\footnote{\url{https://github.com/ventr1c/UGBA}}, as they are all integrated into a single script. Due to certain incompatibilities with modern library versions, certain aspects were modified (example: a small functional change in how the adjacency matrix is updated with the new edges after attacking due to deprecated functionalities related to spare matrices), and the attack script was seeded for reproducibility. The script was also modified to run each experiment $10$ times, and save metrics across runs.
    \item For G2A2C, we use the official implementation\footnote{\url{https://github.com/jumxglhf/G2A2C}}, with the main script \verb|actor_critic.py| modified with added break conditions when the injected node budget is reached. G2A2C specifies how many nodes to inject \textit{per} node in the test set; however, we impose a global limit on the number of injected nodes. 
    \item For the heuristic baselines, we compute the degree and eigenvector centrality via PyTorch, and compute node betweenness centrality using the Python port of \verb|iGraph|.
    \item For the traffic forecasting datasets, 
    \begin{enumerate}
        \item The processed version of the PEMS-BAY dataset is obtained using the official DCRNN repository\footnote{\url{https://github.com/liyaguang/DCRNN}}
        \item The processed version of the PEMSD7(M) and PEMSD7(L) datasets are obtained using the official STGCN repository\footnote{\url{https://github.com/VeritasYin/STGCN_IJCAI-18}}
        \item The data loading is adapted from the official DCRNN repository, and the PEMSD7 datasets are modified to interface with the expected dataloader formats
        \item The STGCN implementation is adapted from a PyTorch implementation of the original model\footnote{\url{https://github.com/hazdzz/STGCN}}
    \end{enumerate}
\end{enumerate}

\subsection{Proof of Lemma \ref{lemma:main_optimization}}\label{appendix:proof_lemma}

\addtocounter{lemma}{-1}
\lemmaone

\begin{proof}
    \begin{align*}
        \norm{\rmB\rmB\tran \rmZ}_F^2 &= \tr \left( (\rmB\rmB\tran \rmZ)\tran(\rmB\rmB\tran \rmZ) \right) \\
        &= \tr \left( \rmZ\tran\rmB\rmB\tran\rmB\rmB\tran\rmZ) \right) \\ 
        &= \tr \left( (\rmB\rmB\tran)^2\rmZ\rmZ\tran) \right)
    \end{align*}
    Since $\rmZ\rmZ\tran$ is real and symmetric, it allows an eigendecomposition $\rmU\bLambda\rmU\tran$. This gives,
    \begin{align*}
        \norm{\rmB\rmB\tran \rmZ}_F^2 &= \tr \left( (\rmB\rmB\tran)^2\rmU\bLambda\rmU\tran \right) \\
        &= \tr \left( (\rmU\tran\rmB\rmB\tran\rmU)^2\bLambda \right) \\
    \end{align*}
    Denoting $\rmY=\rmU\tran\rmB\rmB\tran\rmU$,
    \begin{align*}
        \norm{\rmB\rmB\tran \rmZ}_F^2 &= \tr \left( \rmY^2\bLambda \right) \\
        &= \sum\limits_i \bLambda_{ii} (\rmY^2)_{ii} \\
        &= \sum\limits_i \lambda_i (\rmY^2)_{ii} \\
        &= \sum\limits_i\sum\limits_j \lambda_i \ermY_{ij}^2
    \end{align*}
    The RHS here is a weighted sum of eigenvalues $\lambda_i$ with weights given by $\ermY_{ij}^2$. Since $\rmU$ is orthogonal, 
    \[
        \norm{\rmY}_F = \norm{\rmB\rmB\tran}_F \leq \norm{\rmB}_F^2 \leq \bDelta^2 
    \]
    Or, $\sqrt{\sum\limits_i\sum\limits_j \ermY_{ij}^2} \leq \bDelta^2$. Thus, to maximize the weighted sum given the constraints on the weights, the optimal solution is one where the weights concentrate on the maximum $\lambda_i(=\lambda_1)$, i.e., with the first column of $\rmY$ containing all the weight. The simplest solution which satisfies this is:
    \begin{align*}
        \rmY^* &= \bDelta^2\cdot\rve_1\rve_1\tran \\
        \rmB^*\rmB^{*\top} &= \bDelta^2\cdot\rvu_1\rvu_1\tran
    \end{align*}
    This gives $\rmB^* = \bDelta\cdot\rvu_1\rvv\tran$, where $\rvv$ may be any vector with unit norm ($\rvv$ has no effect on $\rmB^*\rmB^{*\top}$).
\end{proof}
\input{tables/bigtable_gcn_vs_baselines}

\subsection{Equivalent Formulation of $\gL$ for SGC}\label{appendix:sgc_equivalence}
We have the node embeddings $\rmZ$ for a clean graph given by $\rmZ=\rmS^2\rmX\bTheta$. For a perturbation $\rmS_v$, we have the node embeddings given by $\rmS_{p}^2 \rmX_{p} \bTheta$, which is:
\begin{align}
    \rmS_{p}^2 \rmX_{p} \bTheta &=
    \begin{pmatrix}
        \rmS & \rmS_v \\
        \rmS_v\tran & \boldsymbol{0}
    \end{pmatrix} 
    \begin{pmatrix}
        \rmS & \rmS_{v} \\
        \rmS_{v}\tran & \boldsymbol{0}
    \end{pmatrix} \begin{pmatrix} \rmX \\ \tilde{\rmX}\end{pmatrix} \bTheta \\ 
    &=
    \begin{pmatrix}
        \rmS^2 + \rmS_v\rmS_v\tran & \rmS\rmS_v \\
        \rmS_v\tran\rmS & \rmS_v\tran\rmS_v
    \end{pmatrix} \begin{pmatrix} \rmX \\ \tilde{\rmX}\end{pmatrix} \bTheta
\end{align}
Extracting the embeddings of the real nodes, i.e., the first $N$ rows of the above, we have:
\begin{align}
    \rmZ_{p} &= \rmS^2\rmX\bTheta + \rmS_{v}\rmS_{v}\tran\rmX\bTheta + \rmS\rmS_{v}\tilde{\rmX}\bTheta \\\label{eqn:xtilde_effect_on_norm}
    \rmZ_{p} &= \rmZ + \rmS_{v}\rmS_{v}\tran\rmX\bTheta + \rmS\rmS_{v}\tilde{\rmX}\bTheta
\end{align}

This gives us,
\begin{align}
    \norm{\rmZ_{p}-\rmZ}_F^2 &= \norm{\rmS_{v}\rmS_{v}\tran\rmX\bTheta + \rmS\rmS_{v}\tilde{\rmX}\bTheta}_F^2 \\
    &\leq \norm{\rmS_{v}\rmS_{v}\tran\rmX\bTheta}_F^2 + \norm{\rmS\rmS_{v}\tilde{\rmX}\bTheta}_F^2
\end{align}

Specifically, for $\tilde{\rmX}=0$, 
\begin{equation}
    \norm{\rmZ_{p}-\rmZ}_F^2 = \norm{\rmS_{v}\rmS_{v}\tran\rmX\bTheta}_F^2
\end{equation}

\subsection{Hyperparameters}
\subsubsection{Node Classification}
All models trained for NC (\SGC, \GCN, \GIN, and \SAGE) consist of two graph convolution layers of the respective type, with the ReLU activation between them for all models except \SGC. The latent dimension for all datasets is fixed to 16, and there is no Linear readout layer after the second graph convolution. All models are trained using the Adam optimizer \citep{kingma2014adam} with a fixed learning rate of $0.001$, and early stopping with patience $100$.

\subsubsection{Graph-level tasks: Classification and Regression}
All models trained for graph-level tasks (\GCN and \GIN) consist of two graph convolution layers of the respective type, with the ReLU activation between them. This comprises the $\gM^{(agg)}_{\bTheta_2}$ portion of the model. The representations are pooled using Add-pooling, and then fed into the readout module, which is a simple LRL with the output either being a single value (GR) or a set of logits (GC) equal to the number of classes. All models are trained using the Adam optimizer \citep{kingma2014adam} with a starting learning rate of $0.001$, reduced by a factor of $\gamma=0.9$ if the tracked validation metric---F1 score for GC, RMSE for GR---does not improve for $20$ epochs (plateau lr reduction), with the minimum lr set to $1e-4$. We also use early stopping here with the same patience of $100$ epochs.

\subsection{Additional results}\label{appendix:results_additional}

\subsubsection{Performance vs Baselines on the \GCN architecture}\label{appendix:baselines_vs_gcn}
In Table \ref{tab:GCN_baselines}., we re-run the AGIA, TDGIA, and ATDGIA baselines and PEA on the the \GCN architecture. We note essentially the same trends as the effects on the \SGC architecture in Table \ref{tab:SGC_baselines}. 

\begin{figure*}[ht]
    \centering
    \includegraphics[width=\linewidth]{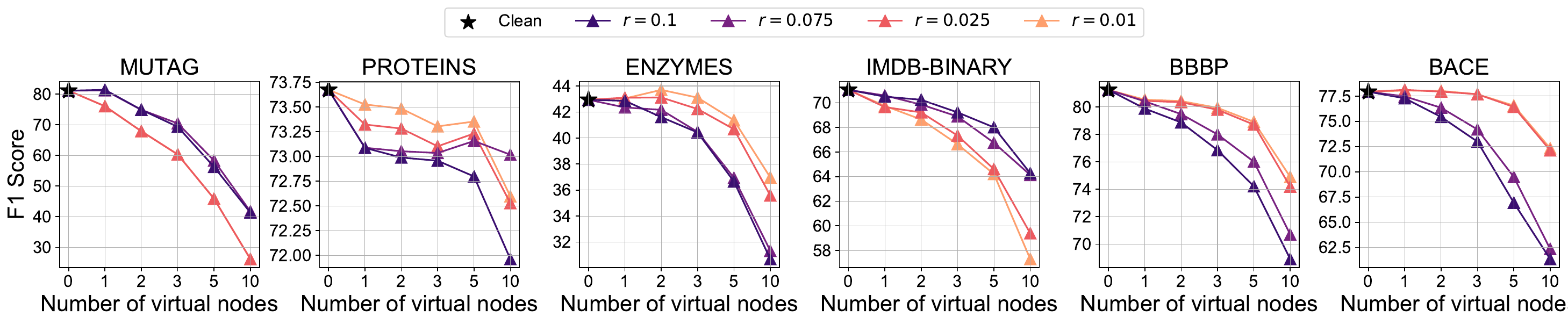}
    \caption{\GIN Classification F1 vs number of virtual nodes on the six classification datasets. All reported numbers have been averaged over the 10-Fold CV.}\vspace{-1em}
    \label{fig:GINA_gc_f1}
\end{figure*}

\begin{figure*}[ht]
    \centering
    \includegraphics[width=\linewidth]{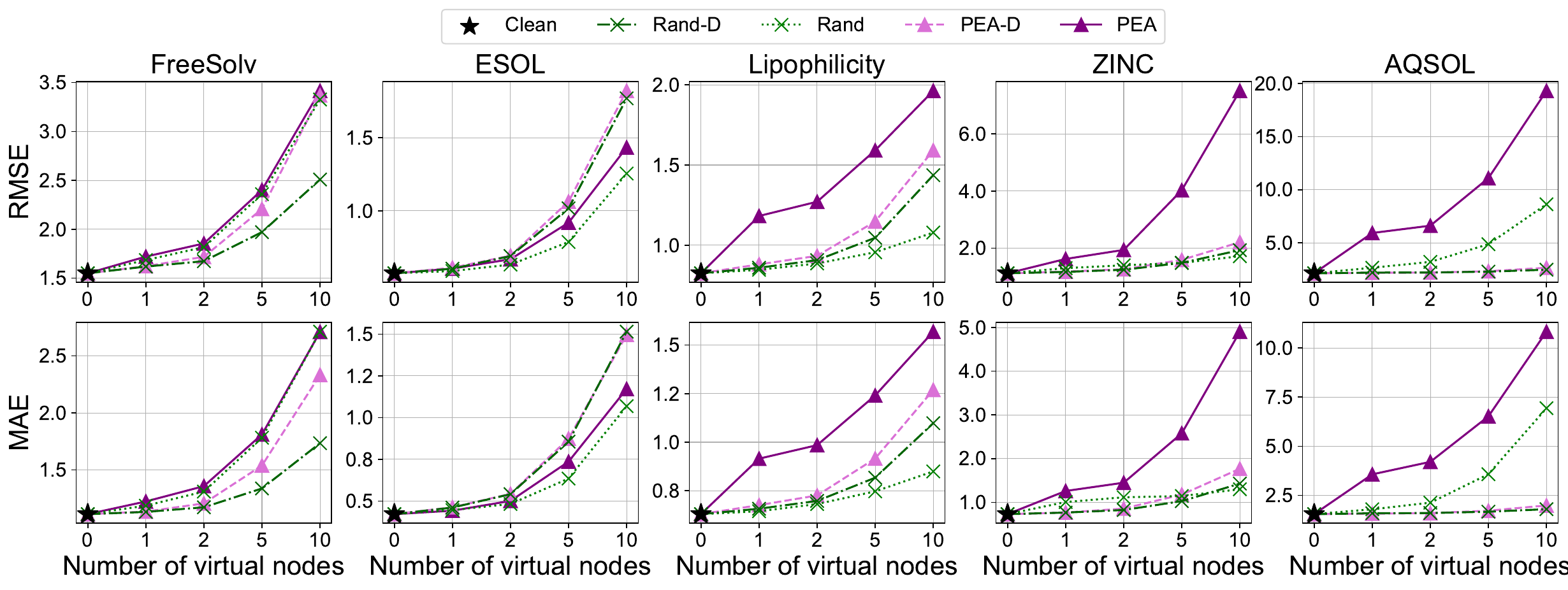}
    \caption{\GIN Regression performance (RMSE and MAE) for \MethodName, \MethodName-D, \rand and \rand-D for the same budget (normalized for the discrete variants), across the five regression datasets.}
    \label{fig:GINA_baselines_mae}
\end{figure*}

\subsubsection{Filter-based Defenses}\label{appendix:defense}
In Tables \ref{tab:filter_defense_cora_with_f1_jacc} and \ref{tab:filter_defense_cora_with_f1_svd}, we report the drops in F1 score in addition to the drop in accuracy. Both metrics follow the same trends, with Jaccard being able to filter out the added perturbations much more than SVD, leading to maintained performance. Although keeping the Jaccard similarity coefficient high ($0.05$) increases the defensive capabilities since more edges are filtered out, the performance on the clean graph is also adversely affected. Compared to a model which has a base accuracy of $86\%$, simply using Jaccard causes performance to drop since many of the original graph edges also get filtered out. 

\subsubsection{Additional Metrics for GC}\label{appendix:gc_additional}
In Figure \ref{fig:GINA_gc_f1}, we plot the F1 score vs number of injected virtual nodes, compared to the Accuracy which was shown in Figure \ref{fig:GINA_gc}.

\begin{table}[H]
\vspace{0.5em}
\caption{Clean accuracy and macro F1 scores followed by their respective decreases for the Jaccard graph defense method on the Cora dataset, using the \SGC architecture.}\label{tab:filter_defense_cora_with_f1_jacc}
\begin{tabular}{rrrrr}
\toprule
\multicolumn{1}{c}{Defense$\rightarrow$} & \multicolumn{2}{c}{Jaccard (0.01)} & \multicolumn{2}{c}{Jaccard (0.05)} \\ \midrule
\multicolumn{1}{c}{$r$} & \multicolumn{1}{c}{Accuracy$\downarrow$} & \multicolumn{1}{c}{F1$\downarrow$} & \multicolumn{1}{c}{Accuracy$\downarrow$} & \multicolumn{1}{c}{F1$\downarrow$} \\\midrule
Clean & 86.08±0.47 & 85.35±0.63 & 83.49±0.47 & 82.45±0.58 \\
0.001 & 0.00±0.06 & 0.00±0.05 & 0.00±0.06 & 0.00±0.06 \\
0.01 & 0.07±0.36 & 0.00±0.42 & 0.00±0.23 & 0.00±0.24 \\
0.05 & 7.06±1.70 & 6.46±1.68 & 0.37±0.31 & 0.27±0.38 \\
0.1 & 37.78±4.42 & 44.42±6.13 & 3.70±1.12 & 3.39±0.98 \\ \bottomrule
\end{tabular}
\end{table}

\begin{table}[H]
\caption{Clean accuracy and macro F1 scores followed by their respective decreases for the SVD graph defense method on the Cora dataset, using the \SGC architecture.}\label{tab:filter_defense_cora_with_f1_svd}
\begin{tabular}{rrrrr}
\toprule
\multicolumn{1}{c}{Defense$\rightarrow$} & \multicolumn{2}{c}{SVD (20)} & \multicolumn{2}{c}{SVD (200)} \\ \midrule
\multicolumn{1}{c}{$r$} & \multicolumn{1}{c}{Accuracy$\downarrow$} & \multicolumn{1}{c}{F1$\downarrow$} & \multicolumn{1}{c}{Accuracy$\downarrow$} & \multicolumn{1}{c}{F1$\downarrow$} \\\midrule
Clean & 75.76±0.49 & 74.22±0.75 & 83.71±0.45 & 83.02±0.52 \\
0.001 & 1.81±0.56 & 2.27±0.69 & 1.60±0.78 & 1.24±0.79 \\
0.01 & 11.73±1.69 & 14.31±1.20 & 5.00±0.78 & 4.28±0.80 \\
0.05 & 34.88±4.14 & 45.98±5.63 & 35.27±4.19 & 42.27±5.25 \\
0.1 & 46.98±0.90 & 65.73±1.42 & 54.91±0.55 & 74.56±1.03 \\ \bottomrule
\end{tabular}
\end{table}

\subsubsection{Ablation Study on \SAGE and \GIN}\label{appendix:sage_ablation}
Continuing from Section \ref{sec:sage_discussion}, we report the drops in Accuracy and macro F1 scores across four values of $r$ for all three NC datasets in Tables \ref{tab:sage_ablation_all1}-\ref{tab:sage_ablation_all3}.

\begin{table}[h]
\caption{\SAGE/\GIN ablation on the Cora dataset}\label{tab:sage_ablation_all1}
\resizebox{\linewidth}{!}{
\begin{tabular}{ccccccc}
\toprule
\multirow{2}{*}{Layer} & \multirow{2}{*}{Aggregation} & \multirow{2}{*}{Metric} & \multicolumn{4}{c}{$r$} \\\cmidrule(lr){4-7}
 &  & & 0.001 & 0.01 & 0.05 & 0.1 \\\midrule
\multirow{4}{*}{SAGE} & \multirow{2}{*}{Mean} & Accuracy$\downarrow$ & 0.04±0.09 & 0.10±0.25 & 3.09±0.65 & 7.73±1.35 \\
 &  & F1$\downarrow$ & 0.06±0.09 & 0.14±0.27 & 3.04±0.64 & 7.64±1.23 \\\cmidrule(lr){2-7}
 & \multirow{2}{*}{Sum} & Accuracy$\downarrow$ & 0.02±0.16 & 4.31±1.88 & 49.26±7.81 & 54.25±5.67 \\
 &  & F1$\downarrow$ & -0.02±0.20 & 4.64±3.02 & 68.47±8.14 & 74.61±6.67 \\\midrule
\multirow{4}{*}{GIN} & \multirow{2}{*}{Sum} & Accuracy$\downarrow$ & 0.03±0.23 & 12.39±8.18 & 51.74±3.79 & 54.62±4.50 \\
 &  & F1$\downarrow$ & 0.07±0.31 & 16.54±12.40 & 72.28±3.69 & 74.33±2.93 \\\cmidrule(lr){2-7}
 & \multirow{2}{*}{Mean} & Accuracy$\downarrow$ & 0.09±0.06 & 0.81±0.36 & 4.70±0.89 & 9.40±1.75 \\
 &  & F1$\downarrow$ & 0.11±0.09 & 0.91±0.38 & 5.42±1.12 & 10.82±1.90\\\bottomrule
\end{tabular}
}
\end{table}

\begin{table}[h]
\caption{\SAGE/\GIN ablation on the Citeseer dataset}\label{tab:sage_ablation_all2}
\resizebox{\linewidth}{!}{
\begin{tabular}{ccccccc}
\toprule
\multirow{2}{*}{Layer} & \multirow{2}{*}{Aggregation} & \multirow{2}{*}{Metric} & \multicolumn{4}{c}{$r$} \\\cmidrule(lr){4-7}
 &  & & 0.001 & 0.01 & 0.05 & 0.1 \\\midrule
\multirow{4}{*}{SAGE} & \multirow{2}{*}{Mean} & Accuracy$\downarrow$ & 0.01±0.08 & 0.09±0.20 & 0.65±0.35 & 1.88±0.43 \\
 & & F1$\downarrow$ & 0.00±0.12 & 0.14±0.24 & 0.66±0.57 & 2.13±0.66 \\\cmidrule(lr){2-7}
 & \multirow{2}{*}{Sum} & Accuracy$\downarrow$ & -0.04±0.18 & 1.67±2.16 & 34.56±14.10 & 43.26±12.41 \\
 & & F1$\downarrow$ & 0.06±0.23 & 2.41±2.36 & 44.31±15.50 & 53.31±13.18 \\\midrule
\multirow{4}{*}{GIN} & \multirow{2}{*}{Sum} & Accuracy$\downarrow$ & 0.20±0.29 & 22.30±9.79 & 41.21±10.76 & 46.83±8.23 \\
 & & F1$\downarrow$ & 0.27±0.38 & 26.49±10.95 & 50.01±10.33 & 55.61±8.50 \\\cmidrule(lr){2-7}
 & \multirow{2}{*}{Mean} & Accuracy$\downarrow$ & 0.01±0.05 & 0.32±0.34 & 1.79±1.07 & 3.99±1.51 \\
 & & F1$\downarrow$ & 0.00±0.05 & 0.23±0.40 & 1.78±1.28 & 3.82±1.61\\\bottomrule
\end{tabular}
}\vspace{0.5em}
\end{table}

\begin{table}[h]
\caption{\SAGE/\GIN ablation on the Pubmed dataset}\label{tab:sage_ablation_all3}
\resizebox{\linewidth}{!}{
\begin{tabular}{ccccccc}
\toprule
\multirow{2}{*}{Layer} & \multirow{2}{*}{Aggregation} & \multirow{2}{*}{Metric} & \multicolumn{4}{c}{$r$} \\\cmidrule(lr){4-7}
 &  & & 0.001 & 0.01 & 0.05 & 0.1 \\\midrule
\multirow{4}{*}{SAGE} & \multirow{2}{*}{Mean} & Accuracy$\downarrow$ & -0.05±0.05 & -0.29±0.13 & 0.79±0.22 & 2.02±0.41 \\
 & & F1$\downarrow$ & -0.05±0.05 & -0.31±0.13 & 0.82±0.26 & 2.22±0.61 \\\cmidrule(lr){2-7}
 & \multirow{2}{*}{Sum} & Accuracy$\downarrow$ & 0.15±0.18 & 28.68±8.91 & 44.94±1.16 & 45.27±0.30 \\
 & & F1$\downarrow$ & 0.31±0.21 & 44.32±9.31 & 64.75±1.91 & 65.35±0.38 \\\midrule
\multirow{4}{*}{GIN} & \multirow{2}{*}{Sum} & Accuracy$\downarrow$ & 0.90±1.17 & 30.38±11.05 & 40.04±9.28 & 40.45±8.72 \\
 & & F1$\downarrow$ & 1.20±1.62 & 43.60±13.75 & 58.70±11.33 & 59.30±10.74 \\\cmidrule(lr){2-7}
 & \multirow{2}{*}{Mean} & Accuracy$\downarrow$ & 0.19±0.17 & 3.05±2.50 & 11.28±4.52 & 14.29±5.43 \\
 & & F1$\downarrow$ & 0.15±0.20 & 3.71±3.97 & 16.65±9.45 & 21.44±9.90\\\bottomrule
\end{tabular}
}\vspace{0.5em}
\end{table}

\subsubsection{Additional Metrics for GR}\label{appendix:gr_additional}
In Figure \ref{fig:GINA_baselines_mae}, we also observe similar trends as Figure \ref{fig:gr_heuristics}, on the MAE metric.

\subsubsection{Exploring Alternative Methods of Choosing $\tilde{\rmX}$}\label{appendix:xtilde}

\input{tables/heuristics_vs_xtilde}
In Table \ref{tab:heuristics_vs_xtilde_full}, we provide a more thorough evaluation of different modes of choosing $\tilde{\rmX}$, with evaluations across the four heuristic baselines. Using the eigenvector centrality of the adjacency matrix seems to be a weak baseline overall, with poor attack performance across all budgets, though choosing $\tilde{\rmX}$ as (scaled) random uniform seems to help. In these results, \MethodName boasts superior performance in most comparisons, only falling behind on \SAGE, which is also overturned by using the rand mode for $\tilde{\rmX}$.

\subsubsection{Performance of Heuristic Baselines across all MPNN architectures}\label{appendix:heuristics}
Similar to Figure \ref{fig:sgc_heuristics_nof1} in the text, in Figure \ref{fig:sgc_heuristics_full}, we plot the accuracy and norm $\norm{\rmZ_p-\rmZ}_F$ trends for all model types across the three datasets. Note that the models where we can perturb $\rmS$ directly (S-\SGC and S-\GCN) suffer significantly larger $\gL(\rmS_v)$ compared to the others.

\begin{figure*}[ht]
    \centering
    \subfloat[Cora]{\includegraphics[width=\linewidth]{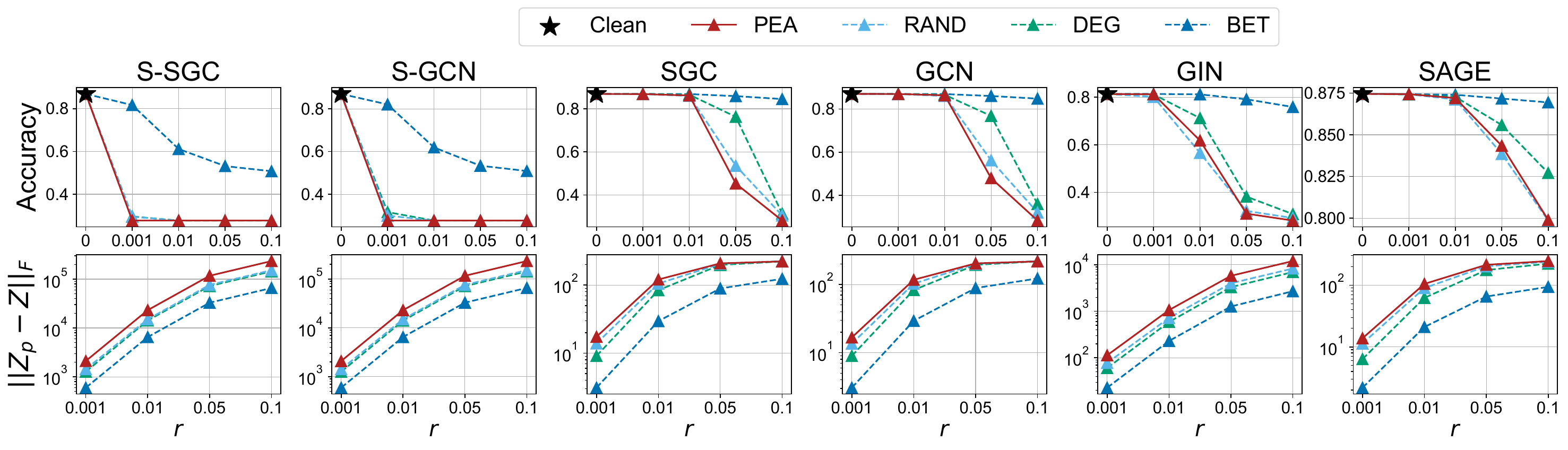}
    }
    \vspace{0.5em}
    \subfloat[Citeseer]{\includegraphics[width=\linewidth]{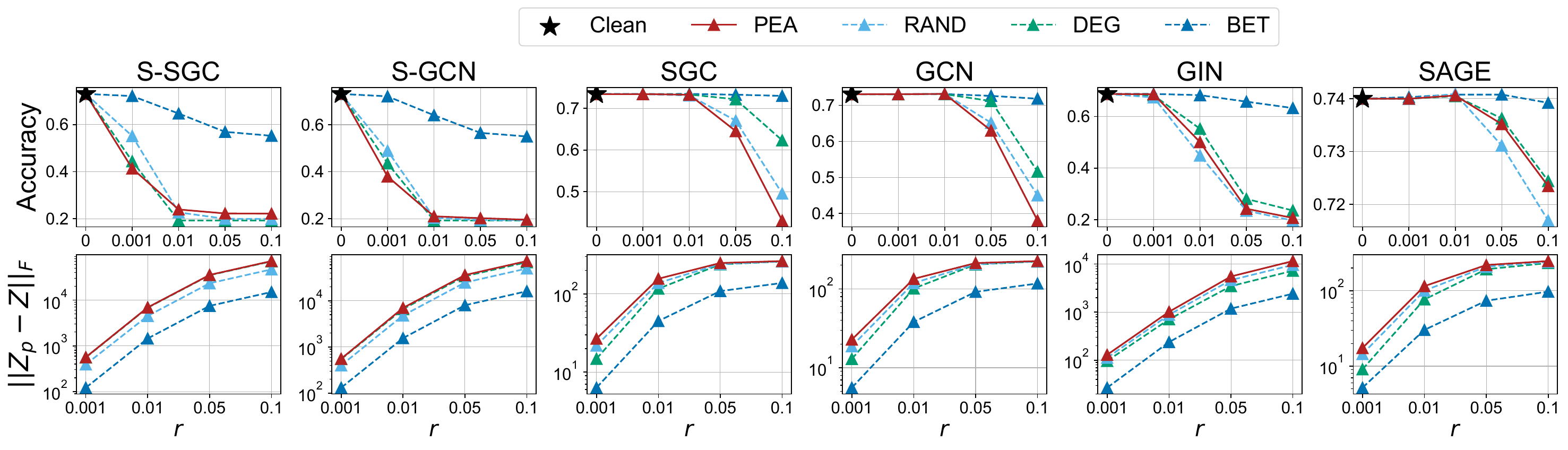}}
    \vspace{0.5em}
    \subfloat[Pubmed]{\includegraphics[width=\linewidth]{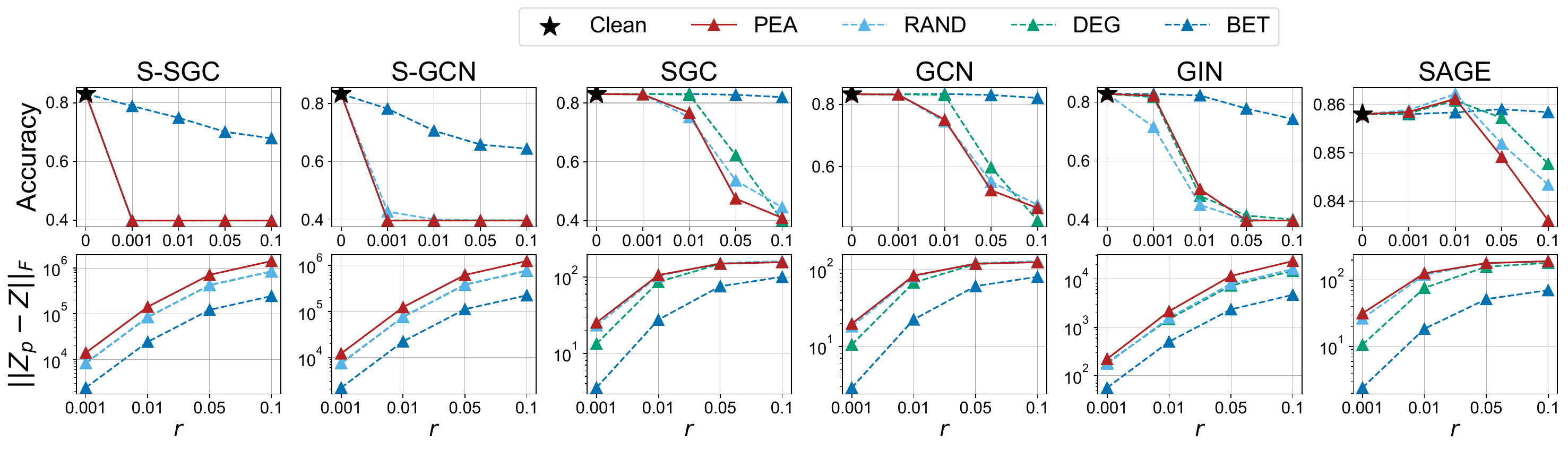}}
    \caption{Performance of  \MethodName and the heuristic baselines on all three NC datasets over different perturbation budgets $r$}
    \label{fig:sgc_heuristics_full}
\end{figure*}

\end{document}

%% file: math_commands.tex
\usepackage{amsmath,amsfonts,bm}

\newcommand{\tran}{^\top}

\def\eqref#1{equation~\ref{#1}}

\def\1{\bm{1}}

\def\rve{{\mathbf{e}}}

\def\rvu{{\mathbf{i}}}

\def\rvu{{\mathbf{u}}}
\def\rvv{{\mathbf{v}}}

\def\rmA{{\mathbf{A}}}
\def\rmB{{\mathbf{B}}}

\def\rmD{{\mathbf{D}}}

\def\rmH{{\mathbf{H}}}
\def\rmI{{\mathbf{I}}}

\def\rmL{{\mathbf{L}}}

\def\rmS{{\mathbf{S}}}

\def\rmU{{\mathbf{U}}}
\def\rmV{{\mathbf{V}}}
\def\rmW{{\mathbf{W}}}
\def\rmX{{\mathbf{X}}}
\def\rmY{{\mathbf{Y}}}
\def\rmZ{{\mathbf{Z}}}

\def\ermY{{\textnormal{Y}}}

\DeclareMathAlphabet{\mathsfit}{\encodingdefault}{\sfdefault}{m}{sl}
\SetMathAlphabet{\mathsfit}{bold}{\encodingdefault}{\sfdefault}{bx}{n}

\def\gC{{\mathcal{C}}}

\def\gE{{\mathcal{E}}}

\def\gG{{\mathcal{G}}}

\def\gL{{\mathcal{L}}}
\def\gM{{\mathcal{M}}}

\def\gV{{\mathcal{V}}}

\def\sR{{\mathbb{R}}}

\DeclareMathOperator{\sign}{sign}

%% file: tables/bigtable_sgc_vs_baselines.tex
\begin{table*}
\caption{Results of \MethodName and the baselines on NC using the \SGC architecture as the defending model. The \emph{decrease} in accuracy and (macro) F1 score are reported for each method, with statistics computed over $10$ runs for all methods except G2A2C, which is over $5$ runs ($3$ for Pubmed) due to very high execution times.}
\resizebox{\textwidth}{!}{
\begin{tabular}{lrrrrrrrrrrrrrr}
\toprule 
 \multirow{2}{*}{\vspace{-4pt}Dataset}  & \multicolumn{1}{c}{\multirow{2}{*}{\vspace{-4pt}$r$}} & \multicolumn{1}{c}{\multirow{2}{*}{\vspace{-4pt}$n_v$}} & \multicolumn{2}{c}{AGIA} & \multicolumn{2}{c}{TDGIA} & \multicolumn{2}{c}{ATDGIA} & \multicolumn{2}{c}{G2A2C} & \multicolumn{2}{c}{PEA} & \multicolumn{2}{c}{PEA (Acting on $\rmS$)} \\ \cmidrule(lr){4-5}\cmidrule(lr){6-7}\cmidrule(lr){8-9}\cmidrule(lr){10-11}\cmidrule(lr){12-13}\cmidrule(lr){14-15}
 &  &  & \multicolumn{1}{c}{Accuracy$\downarrow$} & \multicolumn{1}{c}{F1$\downarrow$} & \multicolumn{1}{c}{Accuracy$\downarrow$} & \multicolumn{1}{c}{F1$\downarrow$} & \multicolumn{1}{c}{Accuracy$\downarrow$} & \multicolumn{1}{c}{F1$\downarrow$} & \multicolumn{1}{c}{Accuracy$\downarrow$} & \multicolumn{1}{c}{F1$\downarrow$} & \multicolumn{1}{c}{Accuracy$\downarrow$} & \multicolumn{1}{c}{F1$\downarrow$} & \multicolumn{1}{c}{Accuracy$\downarrow$} & \multicolumn{1}{c}{F1$\downarrow$} \\
\midrule
\multirow[c]{4}{*}{Cora} & 0.001 & 2 & $0.00${\scriptsize$\pm0.00$} & $0.00${\scriptsize$\pm0.00$} & $0.00${\scriptsize$\pm0.00$} & $0.00${\scriptsize$\pm0.00$} & $0.10${\scriptsize$\pm0.12$} & $0.09${\scriptsize$\pm0.13$} & \underline{$0.21${\scriptsize$\pm0.05$}} & \underline{$0.20${\scriptsize$\pm0.06$}} & $0.07${\scriptsize$\pm0.09$} & $0.06${\scriptsize$\pm0.10$} & $59.20${\scriptsize$\pm0.29$} & $80.10${\scriptsize$\pm0.32$} \\
 & 0.01 & 27 & $1.34${\scriptsize$\pm0.14$} & $1.43${\scriptsize$\pm0.26$} & $1.45${\scriptsize$\pm0.33$} & $1.50${\scriptsize$\pm0.34$} & \underline{$2.23${\scriptsize$\pm0.37$}} & \underline{$2.65${\scriptsize$\pm0.39$}} & $2.19${\scriptsize$\pm0.16$} & $2.11${\scriptsize$\pm0.25$} & $0.73${\scriptsize$\pm0.31$} & $0.54${\scriptsize$\pm0.29$} & $59.20${\scriptsize$\pm0.29$} & $80.10${\scriptsize$\pm0.32$} \\
 & 0.05 & 135 & $16.77${\scriptsize$\pm0.74$} & $17.25${\scriptsize$\pm0.63$} & $16.03${\scriptsize$\pm1.01$} & $16.74${\scriptsize$\pm1.19$} & $28.30${\scriptsize$\pm0.89$} & $30.69${\scriptsize$\pm0.89$} & $9.78${\scriptsize$\pm0.30$} & $9.68${\scriptsize$\pm0.37$} & \underline{$41.58${\scriptsize$\pm3.75$}} & \underline{$47.60${\scriptsize$\pm5.05$}} & $59.20${\scriptsize$\pm0.29$} & $80.10${\scriptsize$\pm0.32$} \\
 & 0.1 & 270 & $29.14${\scriptsize$\pm0.65$} & $30.87${\scriptsize$\pm0.62$} & $29.50${\scriptsize$\pm0.88$} & $30.94${\scriptsize$\pm0.71$} & $37.92${\scriptsize$\pm1.46$} & $39.21${\scriptsize$\pm1.24$} & $19.65${\scriptsize$\pm0.42$} & $19.93${\scriptsize$\pm0.42$} & \underline{$58.74${\scriptsize$\pm0.37$}} & \underline{$79.12${\scriptsize$\pm0.55$}} & $59.20${\scriptsize$\pm0.29$} & $80.10${\scriptsize$\pm0.32$} \\
\midrule 
\multirow[c]{4}{*}{Citeseer} & 0.001 & 3 & $0.00${\scriptsize$\pm0.00$} & $0.00${\scriptsize$\pm0.00$} & $0.00${\scriptsize$\pm0.00$} & $0.00${\scriptsize$\pm0.00$} & $0.00${\scriptsize$\pm0.00$} & $0.01${\scriptsize$\pm0.01$} & \underline{$0.20${\scriptsize$\pm0.06$}} & \underline{$0.28${\scriptsize$\pm0.10$}} & $0.01${\scriptsize$\pm0.03$} & $0.02${\scriptsize$\pm0.05$} & $31.67${\scriptsize$\pm4.84$} & $42.42${\scriptsize$\pm6.19$} \\
 & 0.01 & 33 & $1.62${\scriptsize$\pm0.12$} & $1.51${\scriptsize$\pm0.12$} & $1.46${\scriptsize$\pm0.20$} & $1.37${\scriptsize$\pm0.18$} & $0.77${\scriptsize$\pm0.20$} & $1.09${\scriptsize$\pm0.37$} & \underline{$1.95${\scriptsize$\pm0.13$}} & \underline{$2.52${\scriptsize$\pm0.25$}} & $0.15${\scriptsize$\pm0.28$} & $0.38${\scriptsize$\pm0.32$} & $49.07${\scriptsize$\pm5.73$} & $60.93${\scriptsize$\pm4.64$} \\
 & 0.05 & 166 & $10.11${\scriptsize$\pm0.60$} & $9.94${\scriptsize$\pm0.80$} & $9.58${\scriptsize$\pm0.63$} & $9.36${\scriptsize$\pm0.63$} & \underline{$10.29${\scriptsize$\pm0.79$}} & \underline{$10.09${\scriptsize$\pm1.21$}} & $5.23${\scriptsize$\pm2.38$} & $7.63${\scriptsize$\pm2.68$} & $8.88${\scriptsize$\pm3.40$} & $13.06${\scriptsize$\pm3.44$} & $50.76${\scriptsize$\pm4.32$} & $62.56${\scriptsize$\pm3.54$} \\
 & 0.1 & 332 & $16.00${\scriptsize$\pm1.06$} & $15.33${\scriptsize$\pm1.21$} & $15.79${\scriptsize$\pm1.01$} & $15.20${\scriptsize$\pm1.18$} & $25.80${\scriptsize$\pm0.84$} & $23.32${\scriptsize$\pm1.08$} & $4.74${\scriptsize$\pm1.56$} & $7.48${\scriptsize$\pm2.09$} & \underline{$30.34${\scriptsize$\pm7.38$}} & \underline{$36.66${\scriptsize$\pm6.79$}} & $50.82${\scriptsize$\pm4.12$} & $62.44${\scriptsize$\pm3.88$} \\
\midrule 
\multirow[c]{4}{*}{Pubmed} & 0.001 & 19 & $0.00${\scriptsize$\pm0.00$} & $0.00${\scriptsize$\pm0.00$} & $0.01${\scriptsize$\pm0.01$} & $0.01${\scriptsize$\pm0.01$} & $0.09${\scriptsize$\pm0.04$} & $0.10${\scriptsize$\pm0.04$} & $0.18${\scriptsize$\pm0.02$} & \underline{$0.18${\scriptsize$\pm0.02$}} & \underline{$0.14${\scriptsize$\pm0.05$}} & $0.23${\scriptsize$\pm0.05$} & $43.18${\scriptsize$\pm0.48$} & $63.44${\scriptsize$\pm0.45$} \\
 & 0.01 & 197 & $0.68${\scriptsize$\pm0.09$} & $0.68${\scriptsize$\pm0.10$} & $0.68${\scriptsize$\pm0.06$} & $0.69${\scriptsize$\pm0.06$} & $0.76${\scriptsize$\pm0.08$} & $0.78${\scriptsize$\pm0.06$} & \underline{$13.26${\scriptsize$\pm11.21$}} &\underline{$14.22${\scriptsize$\pm12.18$}} & $6.33${\scriptsize$\pm0.66$} & $11.32${\scriptsize$\pm1.44$} & $43.18${\scriptsize$\pm0.48$} & $63.44${\scriptsize$\pm0.45$} \\
 & 0.05 & 985 & $5.39${\scriptsize$\pm0.16$} & $5.49${\scriptsize$\pm0.18$} & $5.47${\scriptsize$\pm0.23$} & $5.58${\scriptsize$\pm0.20$} & $11.44${\scriptsize$\pm0.52$} & $11.72${\scriptsize$\pm0.54$} & $20.12${\scriptsize$\pm6.28$} & $35.54${\scriptsize$\pm2.27$} & \underline{$35.54${\scriptsize$\pm2.27$}} & \underline{$51.25${\scriptsize$\pm2.93$}} & $43.18${\scriptsize$\pm0.48$} & $63.44${\scriptsize$\pm0.45$} \\
 & 0.1 & 1971 & $7.21${\scriptsize$\pm0.25$} & $7.38${\scriptsize$\pm0.29$} & $7.28${\scriptsize$\pm0.22$} & $7.47${\scriptsize$\pm0.25$} & $18.59${\scriptsize$\pm1.05$} & $18.88${\scriptsize$\pm1.07$} & $29.49${\scriptsize$\pm1.19$} & $42.12${\scriptsize$\pm0.87$}	 & \underline{$42.12${\scriptsize$\pm0.87$}} & \underline{$61.45${\scriptsize$\pm1.22$}} & $43.18${\scriptsize$\pm0.48$} & $63.44${\scriptsize$\pm0.45$} \\
\bottomrule
\end{tabular}
}\vspace{0.5em}
\label{tab:SGC_baselines}
\end{table*}

%% file: tables/bigtable_sgc_vs_heuristics.tex
\begin{table*}
\caption{Results of \MethodName and the heuristic baselines on NC using the \SGC architecture as the defending model. The \emph{decrease} in accuracy and (macro) F1 score are reported for each method, with statistics computed over $10$ runs.}%
\resizebox{\textwidth}{!}{
\begin{tabular}{lrrrrrrrrrrrrrr}
\toprule 
 \multirow{2}{*}{\vspace{-4pt}Dataset}  & \multicolumn{1}{c}{\multirow{2}{*}{\vspace{-4pt}$r$}} & \multicolumn{1}{c}{\multirow{2}{*}{\vspace{-4pt}$n_v$}} & \multicolumn{2}{c}{Clean} & \multicolumn{2}{c}{RAND} & \multicolumn{2}{c}{DEG} & \multicolumn{2}{c}{BET} & \multicolumn{2}{c}{CENT} & \multicolumn{2}{c}{PEA} \\ \cmidrule(lr){4-5}\cmidrule(lr){6-7}\cmidrule(lr){8-9}\cmidrule(lr){10-11}\cmidrule(lr){12-13}\cmidrule(lr){14-15}
 &  &  & \multicolumn{1}{c}{Accuracy} & \multicolumn{1}{c}{F1} & \multicolumn{1}{c}{Accuracy$\downarrow$} & \multicolumn{1}{c}{F1$\downarrow$} & \multicolumn{1}{c}{Accuracy$\downarrow$} & \multicolumn{1}{c}{F1$\downarrow$} & \multicolumn{1}{c}{Accuracy$\downarrow$} & \multicolumn{1}{c}{F1$\downarrow$} & \multicolumn{1}{c}{Accuracy$\downarrow$} & \multicolumn{1}{c}{F1$\downarrow$} & \multicolumn{1}{c}{Accuracy$\downarrow$} & \multicolumn{1}{c}{F1$\downarrow$} \\
\midrule
\multirow[c]{4}{*}{Cora} & 0.001 & 2 & $86.90${\scriptsize$\pm0.29$} & $86.30${\scriptsize$\pm0.32$} & $0.07${\scriptsize$\pm0.28$} & $0.06${\scriptsize$\pm0.32$} & $0.06${\scriptsize$\pm0.35$} & $0.04${\scriptsize$\pm0.40$} & $0.01${\scriptsize$\pm0.30$} & $-0.00${\scriptsize$\pm0.34$} & $0.01${\scriptsize$\pm0.28$} & $0.00${\scriptsize$\pm0.31$} & \underline{$0.07${\scriptsize$\pm0.32$}} & \underline{$0.06${\scriptsize$\pm0.35$}} \\
 & 0.01 & 27 & $86.90${\scriptsize$\pm0.29$} & $86.30${\scriptsize$\pm0.32$} & \underline{$0.92${\scriptsize$\pm0.24$}} & \underline{$0.76${\scriptsize$\pm0.44$}} & $0.35${\scriptsize$\pm0.26$} & $0.22${\scriptsize$\pm0.38$} & $0.07${\scriptsize$\pm0.34$} & $0.01${\scriptsize$\pm0.38$} & $-0.06${\scriptsize$\pm0.28$} & $-0.11${\scriptsize$\pm0.30$} & $0.73${\scriptsize$\pm0.32$} & $0.54${\scriptsize$\pm0.42$} \\
 & 0.05 & 135 & $86.90${\scriptsize$\pm0.29$} & $86.30${\scriptsize$\pm0.32$} & $33.32${\scriptsize$\pm2.77$} & $36.68${\scriptsize$\pm3.21$} & $10.63${\scriptsize$\pm2.23$} & $10.72${\scriptsize$\pm2.27$} & $1.03${\scriptsize$\pm0.26$} & $0.74${\scriptsize$\pm0.35$} & $-0.01${\scriptsize$\pm0.32$} & $-0.05${\scriptsize$\pm0.38$} & \underline{$41.58${\scriptsize$\pm3.74$}} & \underline{$47.60${\scriptsize$\pm5.04$}} \\
 & 0.1 & 270 & $86.90${\scriptsize$\pm0.29$} & $86.30${\scriptsize$\pm0.32$} & $56.39${\scriptsize$\pm0.66$} & $74.47${\scriptsize$\pm1.27$} & $55.52${\scriptsize$\pm1.92$} & $73.04${\scriptsize$\pm3.22$} & $2.39${\scriptsize$\pm0.27$} & $1.80${\scriptsize$\pm0.30$} & $0.44${\scriptsize$\pm0.37$} & $0.51${\scriptsize$\pm0.45$} & \underline{$58.74${\scriptsize$\pm0.15$}} & \underline{$79.12${\scriptsize$\pm0.35$}} \\
\midrule
\multirow[c]{4}{*}{Citeseer} & 0.001 & 3 & $73.42${\scriptsize$\pm0.44$} & $69.54${\scriptsize$\pm0.40$} & $0.00${\scriptsize$\pm0.44$} & $0.01${\scriptsize$\pm0.40$} & $0.01${\scriptsize$\pm0.45$} & $0.02${\scriptsize$\pm0.40$} & $0.00${\scriptsize$\pm0.46$} & $0.00${\scriptsize$\pm0.42$} & $-0.02${\scriptsize$\pm0.44$} & $-0.02${\scriptsize$\pm0.40$} & \underline{$0.01${\scriptsize$\pm0.44$}} & \underline{$0.02${\scriptsize$\pm0.40$}} \\
 & 0.01 & 33 & $73.42${\scriptsize$\pm0.44$} & $69.54${\scriptsize$\pm0.40$} & \underline{$0.32${\scriptsize$\pm0.43$}} & \underline{$0.75${\scriptsize$\pm0.32$}} & $0.10${\scriptsize$\pm0.45$} & $0.20${\scriptsize$\pm0.40$} & $-0.04${\scriptsize$\pm0.49$} & $-0.02${\scriptsize$\pm0.44$} & $-0.13${\scriptsize$\pm0.43$} & $-0.11${\scriptsize$\pm0.40$} & $0.15${\scriptsize$\pm0.51$} & $0.38${\scriptsize$\pm0.41$} \\
 & 0.05 & 166 & $73.42${\scriptsize$\pm0.44$} & $69.54${\scriptsize$\pm0.40$} & $6.37${\scriptsize$\pm2.14$} & $10.01${\scriptsize$\pm1.99$} & $1.25${\scriptsize$\pm0.44$} & $2.92${\scriptsize$\pm0.44$} & $0.15${\scriptsize$\pm0.47$} & $0.26${\scriptsize$\pm0.41$} & $0.10${\scriptsize$\pm0.45$} & $0.08${\scriptsize$\pm0.41$} & \underline{$8.88${\scriptsize$\pm3.47$}} & \underline{$13.06${\scriptsize$\pm3.50$}} \\
 & 0.1 & 332 & $73.42${\scriptsize$\pm0.44$} & $69.54${\scriptsize$\pm0.40$} & $23.86${\scriptsize$\pm5.88$} & $28.54${\scriptsize$\pm5.27$} & $11.10${\scriptsize$\pm3.44$} & $17.66${\scriptsize$\pm4.60$} & $0.43${\scriptsize$\pm0.44$} & $0.51${\scriptsize$\pm0.41$} & $-0.01${\scriptsize$\pm0.43$} & $-0.01${\scriptsize$\pm0.40$} & \underline{$30.34${\scriptsize$\pm7.32$}} & \underline{$36.66${\scriptsize$\pm6.77$}} \\
\midrule
\multirow[c]{4}{*}{Pubmed} & 0.001 & 19 & $83.02${\scriptsize$\pm0.48$} & $82.43${\scriptsize$\pm0.45$} & \underline{$0.18${\scriptsize$\pm0.46$}} & \underline{$0.29${\scriptsize$\pm0.43$}} & $-0.01${\scriptsize$\pm0.48$} & $0.01${\scriptsize$\pm0.46$} & $-0.01${\scriptsize$\pm0.49$} & $-0.01${\scriptsize$\pm0.47$} & $-0.01${\scriptsize$\pm0.48$} & $-0.01${\scriptsize$\pm0.46$} & $0.14${\scriptsize$\pm0.47$} & $0.23${\scriptsize$\pm0.44$} \\
 & 0.01 & 197 & $83.02${\scriptsize$\pm0.48$} & $82.43${\scriptsize$\pm0.45$} & \underline{$7.86${\scriptsize$\pm0.72$}} & \underline{$14.45${\scriptsize$\pm1.12$}} & $0.21${\scriptsize$\pm0.48$} & $0.56${\scriptsize$\pm0.48$} & $-0.05${\scriptsize$\pm0.55$} & $-0.07${\scriptsize$\pm0.53$} & $-0.02${\scriptsize$\pm0.48$} & $-0.01${\scriptsize$\pm0.45$} & $6.33${\scriptsize$\pm1.10$} & $11.32${\scriptsize$\pm1.86$} \\
 & 0.05 & 985 & $83.02${\scriptsize$\pm0.48$} & $82.43${\scriptsize$\pm0.45$} & $29.39${\scriptsize$\pm4.04$} & $43.95${\scriptsize$\pm4.07$} & $20.82${\scriptsize$\pm2.69$} & $35.92${\scriptsize$\pm2.19$} & $0.25${\scriptsize$\pm0.57$} & $0.36${\scriptsize$\pm0.58$} & $0.01${\scriptsize$\pm0.53$} & $0.03${\scriptsize$\pm0.50$} & \underline{$35.54${\scriptsize$\pm1.94$}} & \underline{$51.25${\scriptsize$\pm2.64$}} \\
 & 0.1 & 1971 & $83.02${\scriptsize$\pm0.48$} & $82.43${\scriptsize$\pm0.45$} & $38.48${\scriptsize$\pm1.79$} & $55.64${\scriptsize$\pm2.65$} & $43.00${\scriptsize$\pm0.24$} & $63.12${\scriptsize$\pm0.44$} & $1.04${\scriptsize$\pm0.56$} & $1.47${\scriptsize$\pm0.60$} & $0.15${\scriptsize$\pm0.49$} & $0.16${\scriptsize$\pm0.45$} & \underline{$42.12${\scriptsize$\pm0.51$}} & \underline{$61.45${\scriptsize$\pm0.92$}} \\
\bottomrule
\end{tabular}
}\vspace{0.5em}
\label{tab:SGC_heuristics}
\end{table*}

%% file: tables/nc_arch_xtilde_subfloat.tex
\begin{figure*}[ht]
\begin{minipage}[t]{0.49\textwidth}
\centering
\subfloat[Performance of PEA on NC across MPNN architectures.]{
\resizebox{\linewidth}{!}{
\begin{tabular}{llrrcrr}
\toprule
Dataset & Layer & \multicolumn{1}{c}{Clean Acc} & \multicolumn{1}{c}{Clean F1} & $r$ & \multicolumn{1}{c}{Accuracy$\downarrow$} & \multicolumn{1}{c}{F1$\downarrow$} \\ \midrule
\multirow{12}{*}{\vspace{-2em}Cora} & \multirow{2}{*}{S-SGC} & \multirow{2}{*}{$86.90${\small$\pm0.29$}} & \multirow{2}{*}{$86.30${\small$\pm0.32$}} & 0.05 & $59.20${\small$\pm0.00$} & $80.10${\small$\pm0.00$} \\
 &  &  &  & 0.1 & $59.20${\small$\pm0.00$} & $80.10${\small$\pm0.00$} \\\cmidrule(lr){2-7}
 & \multirow{2}{*}{S-GCN} & \multirow{2}{*}{$86.98${\small$\pm0.28$}} & \multirow{2}{*}{$86.40${\small$\pm0.32$}} & 0.05 & $59.28${\small$\pm0.00$} & $80.20${\small$\pm0.00$} \\
 &  &  &  & 0.1 & $59.28${\small$\pm0.00$} & $80.20${\small$\pm0.00$} \\\cmidrule(lr){2-7}
 & \multirow{2}{*}{SGC} & \multirow{2}{*}{$86.90${\small$\pm0.29$}} & \multirow{2}{*}{$86.30${\small$\pm0.32$}} & 0.05 & $41.58${\small$\pm3.74$} & $47.60${\small$\pm5.04$} \\
 &  &  &  & 0.1 & $58.74${\small$\pm0.15$} & $79.12${\small$\pm0.35$} \\\cmidrule(lr){2-7}
 & \multirow{2}{*}{GCN} & \multirow{2}{*}{$86.98${\small$\pm0.28$}} & \multirow{2}{*}{$86.40${\small$\pm0.32$}} & 0.05 & $39.01${\small$\pm7.51$} & $45.59${\small$\pm10.81$} \\
 &  &  &  & 0.1 & $58.63${\small$\pm0.45$} & $78.86${\small$\pm0.88$} \\\cmidrule(lr){2-7}
 & \multirow{2}{*}{GIN} & \multirow{2}{*}{$81.34${\small$\pm3.84$}} & \multirow{2}{*}{$79.79${\small$\pm4.86$}} & 0.05 & $50.25${\small$\pm5.92$} & $69.15${\small$\pm4.25$} \\
 &  &  &  & 0.1 & $53.15${\small$\pm7.07$} & $71.69${\small$\pm4.18$} \\\cmidrule(lr){2-7}
 & \multirow{2}{*}{SAGE} & \multirow{2}{*}{$87.45${\small$\pm0.28$}} & \multirow{2}{*}{$86.87${\small$\pm0.33$}} & 0.05 & $3.11${\small$\pm0.82$} & $3.09${\small$\pm0.89$} \\
 &  &  &  & 0.1 & $7.56${\small$\pm1.05$} & $7.59${\small$\pm1.09$} \\ \midrule
\multirow{12}{*}{\vspace{-2em}Citeseer} & \multirow{2}{*}{S-SGC} & \multirow{2}{*}{$73.42${\small$\pm0.44$}} & \multirow{2}{*}{$69.54${\small$\pm0.40$}} & 0.05 & $50.76${\small$\pm4.26$} & $62.56${\small$\pm3.48$} \\
 &  &  &  & 0.1 & $50.82${\small$\pm4.06$} & $62.44${\small$\pm3.82$} \\\cmidrule(lr){2-7}
 & \multirow{2}{*}{S-GCN} & \multirow{2}{*}{$73.09${\small$\pm0.32$}} & \multirow{2}{*}{$69.52${\small$\pm0.23$}} & 0.05 & $52.83${\small$\pm1.79$} & $63.22${\small$\pm2.42$} \\
 &  &  &  & 0.1 & $53.50${\small$\pm1.30$} & $64.07${\small$\pm0.30$} \\\cmidrule(lr){2-7}
 & \multirow{2}{*}{SGC} & \multirow{2}{*}{$73.42${\small$\pm0.44$}} & \multirow{2}{*}{$69.54${\small$\pm0.40$}} & 0.05 & $8.88${\small$\pm3.47$} & $13.06${\small$\pm3.50$} \\
 &  &  &  & 0.1 & $30.34${\small$\pm7.32$} & $36.66${\small$\pm6.77$} \\\cmidrule(lr){2-7}
 & \multirow{2}{*}{GCN} & \multirow{2}{*}{$73.09${\small$\pm0.32$}} & \multirow{2}{*}{$69.52${\small$\pm0.23$}} & 0.05 & $10.08${\small$\pm4.19$} & $13.59${\small$\pm4.00$} \\
 &  &  &  & 0.1 & $35.06${\small$\pm6.43$} & $41.68${\small$\pm6.19$} \\\cmidrule(lr){2-7}
 & \multirow{2}{*}{GIN} & \multirow{2}{*}{$68.71${\small$\pm1.18$}} & \multirow{2}{*}{$64.82${\small$\pm2.21$}} & 0.05 & $44.49${\small$\pm11.31$} & $51.29${\small$\pm14.05$} \\
 &  &  &  & 0.1 & $48.04${\small$\pm6.37$} & $56.49${\small$\pm6.60$} \\\cmidrule(lr){2-7}
 & \multirow{2}{*}{SAGE} & \multirow{2}{*}{$74.00${\small$\pm0.36$}} & \multirow{2}{*}{$70.74${\small$\pm0.42$}} & 0.05 & $0.48${\small$\pm0.47$} & $0.63${\small$\pm0.65$} \\
 &  &  &  & 0.1 & $1.65${\small$\pm0.68$} & $1.94${\small$\pm0.85$} \\ \midrule
\multirow{12}{*}{\vspace{-2em}Pubmed} & \multirow{2}{*}{S-SGC} & \multirow{2}{*}{$83.02${\small$\pm0.48$}} & \multirow{2}{*}{$82.43${\small$\pm0.45$}} & 0.05 & $43.18${\small$\pm0.00$} & $63.44${\small$\pm0.00$} \\
 &  &  &  & 0.1 & $43.18${\small$\pm0.00$} & $63.44${\small$\pm0.00$} \\\cmidrule(lr){2-7}
 & \multirow{2}{*}{S-GCN} & \multirow{2}{*}{$83.27${\small$\pm0.62$}} & \multirow{2}{*}{$82.65${\small$\pm0.61$}} & 0.05 & $43.48${\small$\pm0.18$} & $63.68${\small$\pm0.06$} \\
 &  &  &  & 0.1 & $43.48${\small$\pm0.18$} & $63.68${\small$\pm0.06$} \\\cmidrule(lr){2-7}
 & \multirow{2}{*}{SGC} & \multirow{2}{*}{$83.02${\small$\pm0.48$}} & \multirow{2}{*}{$82.43${\small$\pm0.45$}} & 0.05 & $35.54${\small$\pm1.94$} & $51.25${\small$\pm2.64$} \\
 &  &  &  & 0.1 & $42.12${\small$\pm0.51$} & $61.45${\small$\pm0.92$} \\\cmidrule(lr){2-7}
 & \multirow{2}{*}{GCN} & \multirow{2}{*}{$83.27${\small$\pm0.62$}} & \multirow{2}{*}{$82.65${\small$\pm0.61$}} & 0.05 & $30.84${\small$\pm10.30$} & $47.40${\small$\pm10.80$} \\
 &  &  &  & 0.1 & $36.57${\small$\pm10.25$} & $55.06${\small$\pm11.62$} \\\cmidrule(lr){2-7}
 & \multirow{2}{*}{GIN} & \multirow{2}{*}{$82.82${\small$\pm4.27$}} & \multirow{2}{*}{$80.05${\small$\pm9.26$}} & 0.05 & $43.01${\small$\pm0.19$} & $61.04${\small$\pm0.13$} \\
 &  &  &  & 0.1 & $43.03${\small$\pm0.18$} & $61.08${\small$\pm0.06$} \\\cmidrule(lr){2-7}
 & \multirow{2}{*}{SAGE} & \multirow{2}{*}{$85.80${\small$\pm0.56$}} & \multirow{2}{*}{$85.50${\small$\pm0.53$}} & 0.05 & $0.88${\small$\pm0.58$} & $0.90${\small$\pm0.56$} \\
 &  &  &  & 0.1 & $2.21${\small$\pm0.71$} & $2.44${\small$\pm0.75$} \\ \bottomrule
\end{tabular}
}\label{tab:nc_architectures}
}
\end{minipage}
\begin{minipage}[t]{0.49\textwidth}
\centering
\subfloat[Decrease in accuracy for six modes of choosing $\tilde{\rmX}$ on the Cora and Citeseer datasets, compared with random injection.]{
\resizebox{\linewidth}{!}{
\begin{tabular}{llrrrr}
\toprule
\multirow[c]{2}{*}{\vspace{-3pt}Layer} & \multirow[c]{2}{*}{\vspace{-3pt}$\tilde{\rmX}$ Mode} & \multicolumn{2}{c}{Cora} & \multicolumn{2}{c}{Citeseer} \\\cmidrule(lr){3-4}\cmidrule{5-6}
&  & \multicolumn{1}{c}{PEA} & \multicolumn{1}{c}{RAND} & \multicolumn{1}{c}{PEA} & \multicolumn{1}{c}{RAND} \\

\midrule
\multirow[c]{6}{*}{SGC} & zeros & $41.58${\small$\pm3.74$} & $33.32${\small$\pm2.77$} & $8.88${\small$\pm3.47$} & $6.37${\small$\pm2.14$} \\
 & avg & $46.34${\small$\pm4.00$} & $37.28${\small$\pm2.99$} & $6.60${\small$\pm2.12$} & $5.20${\small$\pm1.78$} \\
 & randsamp & $45.66${\small$\pm2.32$} & $36.97${\small$\pm1.97$} & $7.37${\small$\pm3.71$} & $5.29${\small$\pm2.54$} \\
 & rand & \underline{$58.86${\small$\pm0.35$}} & $54.94${\small$\pm2.42$} & $15.67${\small$\pm6.05$} & $11.96${\small$\pm4.56$} \\
 & randn & $41.21${\small$\pm4.13$} & $33.06${\small$\pm2.94$} & $5.36${\small$\pm1.99$} & $4.31${\small$\pm1.53$} \\
 & randd & $43.32${\small$\pm5.30$} & $34.97${\small$\pm4.00$} & $9.27${\small$\pm5.24$} & $6.98${\small$\pm4.01$} \\
\midrule
\multirow[c]{6}{*}{GCN} & zeros & $39.01${\small$\pm7.51$} & $30.77${\small$\pm6.00$} & $10.08${\small$\pm4.19$} & $7.93${\small$\pm2.67$} \\
 & avg & $42.25${\small$\pm7.26$} & $33.60${\small$\pm6.01$} & $10.83${\small$\pm4.49$} & $8.52${\small$\pm2.89$} \\
 & randsamp & $41.65${\small$\pm8.13$} & $33.39${\small$\pm6.53$} & $13.84${\small$\pm11.10$} & $11.27${\small$\pm8.58$} \\
 & rand & \underline{$58.07${\small$\pm1.61$}} & $52.83${\small$\pm4.25$} & $20.28${\small$\pm7.26$} & $15.76${\small$\pm5.69$} \\
 & randn & $39.54${\small$\pm7.12$} & $31.07${\small$\pm5.71$} & $10.35${\small$\pm4.28$} & $7.92${\small$\pm2.79$} \\
 & randd & $44.66${\small$\pm6.63$} & $35.56${\small$\pm5.57$} & $14.32${\small$\pm8.06$} & $11.47${\small$\pm5.95$} \\
\midrule
\multirow[c]{6}{*}{GIN} & zeros & $50.25${\small$\pm5.92$} & $49.10${\small$\pm3.26$} & $44.49${\small$\pm11.31$} & $45.25${\small$\pm6.25$} \\
 & avg & $49.42${\small$\pm5.25$} & $49.09${\small$\pm3.23$} & $44.14${\small$\pm11.49$} & $45.14${\small$\pm6.44$} \\
 & randsamp & \underline{$54.78${\small$\pm0.80$}} & $48.27${\small$\pm7.22$} & $45.92${\small$\pm6.94$} & $45.17${\small$\pm10.41$} \\
 & rand & $49.76${\small$\pm7.00$} & $51.95${\small$\pm3.45$} & $45.01${\small$\pm6.75$} & $46.27${\small$\pm7.80$} \\
 & randn & $48.12${\small$\pm5.59$} & $48.52${\small$\pm4.17$} & $41.97${\small$\pm7.78$} & $44.24${\small$\pm6.79$} \\
 & randd & $52.67${\small$\pm5.75$} & $51.52${\small$\pm6.27$} & $44.49${\small$\pm8.25$} & $49.51${\small$\pm6.73$} \\
\midrule
\multirow[c]{6}{*}{SAGE} & zeros & $3.11${\small$\pm0.82$} & $3.60${\small$\pm0.71$} & $0.48${\small$\pm0.47$} & $0.89${\small$\pm0.53$} \\
 & avg & $8.08${\small$\pm1.54$} & $9.14${\small$\pm1.63$} & $1.32${\small$\pm0.72$} & $2.02${\small$\pm0.91$} \\
 & randsamp & $8.55${\small$\pm1.44$} & $9.55${\small$\pm1.75$} & $1.36${\small$\pm0.81$} & $2.22${\small$\pm0.77$} \\
 & rand & \underline{$54.52${\small$\pm2.44$}} & $52.48${\small$\pm2.90$} & $14.12${\small$\pm6.89$} & $14.03${\small$\pm6.02$} \\
 & randn & $3.18${\small$\pm0.58$} & $3.79${\small$\pm0.72$} & $0.62${\small$\pm0.51$} & $0.94${\small$\pm0.49$} \\
 & randd & $7.44${\small$\pm1.25$} & $8.73${\small$\pm1.09$} & $0.96${\small$\pm0.62$} & $1.60${\small$\pm0.87$} \\
\bottomrule
\end{tabular}
}\label{tab:heuristics_vs_xtilde_mini}
}
\vspace{6pt}
\subfloat[Clean accuracy followed by its respective decrease for two filter-based graph defense methods--Jaccard and SVD--on the Cora dataset.]{
\resizebox{\linewidth}{!}{    
\begin{tabular}{crrrr}
\toprule
\multicolumn{1}{c}{Defense$\rightarrow$} & \multicolumn{1}{c}{Jaccard (0.01)}       & \multicolumn{1}{c}{Jaccard (0.05)}       & \multicolumn{1}{c}{SVD (20)}             & \multicolumn{1}{c}{SVD (200)}            \\
\multicolumn{1}{c}{(Clean Acc)}                                      & \multicolumn{1}{c}{(86.08±0.47)}         & \multicolumn{1}{c}{(83.49±0.47)}         & \multicolumn{1}{c}{(75.76±0.49)}         & \multicolumn{1}{c}{(83.71±0.45)}         \\ \cmidrule(lr){1-1}\cmidrule(lr){2-5}
\multicolumn{1}{c}{$r$}                                   & Accuracy$\downarrow$ & Accuracy$\downarrow$ & Accuracy$\downarrow$ & Accuracy$\downarrow$ \\ \midrule
0.01                                                      & 0.07±0.36                                & 0.00±0.23                                & 11.73±1.69                               & 5.00±0.78                                \\
0.05                                                      & 7.06±1.70                                & 0.37±0.31                                & 34.88±4.14                               & 35.27±4.19                               \\
0.10                                                       & 37.78±4.42                               & 3.70±1.12                                & 46.98±0.90                               & 54.91±0.55                               \\ \bottomrule
\end{tabular}
}\label{tab:filter_defense_cora_no_f1}
}
\end{minipage}
\caption{Base performance across different architectures (Left); Performance of different $\tilde{\rmX}$ modes across layers for a fixed budget (Top-right); and Performance of \MethodName vs filter-based defense methods (Bottom-right).}
\end{figure*}

%% file: tables/bigtable_gcn_vs_baselines.tex
\begin{table*}[htb]
\caption{Results of \MethodName and the baselines on Node c-lassification using the \GCN architecture as the defending model. The accuracy and (macro) F1 score on the clean datasets are reported first, followed by their respective \emph{decrease} for each method.}
\resizebox{\textwidth}{!}{
\begin{tabular}{lrrrrrrrrrr}
\toprule
\multirow{2}{*}{Dataset}  & \multicolumn{1}{c}{\multirow{2}{*}{$r$}} & \multicolumn{1}{c}{\multirow{2}{*}{$\nv$}}                                                        & \multicolumn{2}{c}{AGIA}                                                        & \multicolumn{2}{c}{TDGIA}                                                       & \multicolumn{2}{c}{ATDGIA}                                                      & \multicolumn{2}{c}{\MethodName}                                                         \\ \cmidrule(lr){4-5}\cmidrule(lr){6-7}\cmidrule(lr){8-9}\cmidrule(lr){10-11} 
                          & \multicolumn{1}{c}{}                     & \multicolumn{1}{c}{}                                                           & \multicolumn{1}{c}{$\rm Accuracy_\downarrow$} & \multicolumn{1}{c}{$\rm F1_\downarrow$} & \multicolumn{1}{c}{$\rm Accuracy_\downarrow$} & \multicolumn{1}{c}{$\rm F1_\downarrow$} & \multicolumn{1}{c}{$\rm Accuracy_\downarrow$} & \multicolumn{1}{c}{$\rm F1_\downarrow$} & \multicolumn{1}{c}{$\rm Accuracy_\downarrow$} & \multicolumn{1}{c}{$\rm F1_\downarrow$} \\ \midrule
\multirow{4}{*}{Cora}     & 0.001                                    & 2                                          & $0.04${\scriptsize$\pm0.11$}                   & $0.03${\scriptsize$\pm0.08$}             & $0.07${\scriptsize$\pm0.15$}                   & $0.05${\scriptsize$\pm0.10$}             & \underline{$0.07${\scriptsize$\pm0.15$}}                   & \underline{$0.07${\scriptsize$\pm0.15$}}             & $0.07${\scriptsize$\pm0.17$}                   & $0.04${\scriptsize$\pm0.17$}             \\
                          & 0.01                                     & 27                                         & $2.52${\scriptsize$\pm0.62$}                   & $2.21${\scriptsize$\pm0.62$}             & $2.41${\scriptsize$\pm0.48$}                   & $2.04${\scriptsize$\pm0.56$}             & \underline{$3.78${\scriptsize$\pm0.92$}}          & \underline{$5.11${\scriptsize$\pm1.69$}}    & $0.72${\scriptsize$\pm0.42$}                   & $0.57${\scriptsize$\pm0.42$}             \\
                          & 0.05                                     & 135                                        & $8.96${\scriptsize$\pm1.18$}                   & $10.24${\scriptsize$\pm1.58$}            & $9.81${\scriptsize$\pm1.55$}                   & $11.05${\scriptsize$\pm1.89$}            & $12.78${\scriptsize$\pm2.25$}                  & $13.55${\scriptsize$\pm2.24$}            & \underline{$36.12${\scriptsize$\pm1.94$}}         & \underline{$45.34${\scriptsize$\pm3.18$}}   \\
                          & 0.1                                      & 270                                        & $12.44${\scriptsize$\pm1.30$}                  & $13.38${\scriptsize$\pm1.69$}            & $13.00${\scriptsize$\pm1.45$}                  & $13.72${\scriptsize$\pm1.88$}            & $14.52${\scriptsize$\pm2.00$}                  & $15.05${\scriptsize$\pm2.35$}            & \underline{$51.65${\scriptsize$\pm0.77$}}         & \underline{$73.08${\scriptsize$\pm0.77$}}   \\\midrule
\multirow{4}{*}{Citeseer} & 0.001                                    & 3                                          & $0.15${\scriptsize$\pm0.15$}                   & $0.11${\scriptsize$\pm0.11$}             & $0.15${\scriptsize$\pm0.15$}                   & $0.11${\scriptsize$\pm0.11$}             & \underline{$0.18${\scriptsize$\pm0.15$}}          & \underline{$0.17${\scriptsize$\pm0.13$}}    & $0.00${\scriptsize$\pm0.07$}                   & $-0.00${\scriptsize$\pm0.06$}            \\
                          & 0.01                                     & 33                                         & $2.26${\scriptsize$\pm0.28$}                   & $2.03${\scriptsize$\pm0.24$}             & \underline{$2.32${\scriptsize$\pm0.52$}}          & \underline{$2.08${\scriptsize$\pm0.48$}}    & $1.99${\scriptsize$\pm0.63$}                   & $1.92${\scriptsize$\pm0.58$}             & $0.26${\scriptsize$\pm0.52$}                   & $0.45${\scriptsize$\pm0.70$}             \\
                          & 0.05                                     & 166                                        & $8.61${\scriptsize$\pm0.45$}                   & $7.68${\scriptsize$\pm0.44$}             & $8.49${\scriptsize$\pm0.87$}                   & $7.58${\scriptsize$\pm0.73$}             & $12.92${\scriptsize$\pm0.68$}                  & $11.23${\scriptsize$\pm0.59$}            & \underline{$15.16${\scriptsize$\pm6.65$}}         & \underline{$17.97${\scriptsize$\pm6.46$}}   \\
                          & 0.1                                      & 332                                        & $11.05${\scriptsize$\pm0.82$}                  & $9.66${\scriptsize$\pm0.74$}             & $10.81${\scriptsize$\pm0.61$}                  & $9.55${\scriptsize$\pm0.57$}             & $14.61${\scriptsize$\pm1.44$}                  & $12.62${\scriptsize$\pm1.22$}            & \underline{$37.64${\scriptsize$\pm7.82$}}         & \underline{$45.38${\scriptsize$\pm8.29$}}   \\\midrule
\multirow{4}{*}{Pubmed}   & 0.001                                    & 19                                         & $0.04${\scriptsize$\pm0.02$}                   & $0.03${\scriptsize$\pm0.03$}             & $0.02${\scriptsize$\pm0.02$}                   & $0.03${\scriptsize$\pm0.03$}             & $0.03${\scriptsize$\pm0.09$}                   & $0.02${\scriptsize$\pm0.11$}             & \underline{$0.21${\scriptsize$\pm0.14$}}          & \underline{$0.33${\scriptsize$\pm0.17$}}    \\
                          & 0.01                                     & 197                                        & $0.76${\scriptsize$\pm0.26$}                   & $0.76${\scriptsize$\pm0.26$}             & $0.72${\scriptsize$\pm0.26$}                   & $0.72${\scriptsize$\pm0.31$}             & $1.10${\scriptsize$\pm0.10$}                   & $1.16${\scriptsize$\pm0.15$}             & \underline{$4.33${\scriptsize$\pm0.76$}}          & \underline{$7.24${\scriptsize$\pm1.48$}}    \\
                          & 0.05                                     & 985                                        & $3.99${\scriptsize$\pm0.37$}                   & $4.14${\scriptsize$\pm0.41$}             & $3.78${\scriptsize$\pm0.51$}                   & $3.88${\scriptsize$\pm0.51$}             & $10.22${\scriptsize$\pm1.08$}                  & $10.65${\scriptsize$\pm1.11$}            & \underline{$34.24${\scriptsize$\pm4.64$}}         & \underline{$50.24${\scriptsize$\pm5.42$}}   \\
                          & 0.1                                      & 1971                                       & $11.98${\scriptsize$\pm1.14$}                  & $12.39${\scriptsize$\pm1.30$}            & $11.54${\scriptsize$\pm0.97$}                  & $11.88${\scriptsize$\pm1.19$}            & $11.98${\scriptsize$\pm1.13$}                  & $12.29${\scriptsize$\pm1.31$}            & \underline{$41.78${\scriptsize$\pm1.56$}}         & \underline{$61.09${\scriptsize$\pm2.35$}}   \\ \bottomrule
\end{tabular}
}
\label{tab:GCN_baselines}
\end{table*}

%% file: tables/heuristics_vs_xtilde.tex
\begin{table*}[h]
\centering
\caption{Performance of different ways of choosing $\tilde{\rmX}$ across different MPNN architectures for the Cora and Citeseer datasets, with a fixed budget of $r=0.05$. For a given $\tilde{\rmX}$, the maximum decrease across all methods is underlined.}
\begin{tabular}{lclrrrrr}
\toprule
\multirow[c]{2}{*}{\vspace{-3pt}Dataset} & \multirow[c]{2}{*}{\vspace{-5pt}\shortstack{Layer\\(Clean Acc)}} & \multirow[c]{2}{*}{\vspace{-3pt}$\tilde{\rmX}$ Mode} & \multicolumn{5}{c}{Accuracy Decrease (in \%)} \\\cmidrule{4-8}
&  &  & \multicolumn{1}{c}{PEA} & \multicolumn{1}{c}{RAND} & \multicolumn{1}{c}{DEG} & \multicolumn{1}{c}{BET} & \multicolumn{1}{c}{CENT} \\
\midrule
\multirow[c]{24}{*}{\vspace{-1.5em}Cora} & \multirow[c]{6}{*}{\shortstack{SGC\\[0.2em]($86.95${\scriptsize$\pm0.27$})}} & zeros & \underline{$41.58${\scriptsize$\pm3.74$}} & $33.32${\scriptsize$\pm2.77$} & $10.63${\scriptsize$\pm2.23$} & $1.03${\scriptsize$\pm0.26$} & $-0.01${\scriptsize$\pm0.32$} \\
 &  & avg & \underline{$46.34${\scriptsize$\pm4.00$}} & $37.28${\scriptsize$\pm2.99$} & $16.89${\scriptsize$\pm3.13$} & $1.52${\scriptsize$\pm0.32$} & $0.04${\scriptsize$\pm0.33$} \\
 &  & randsamp & \underline{$45.66${\scriptsize$\pm2.32$}} & $36.97${\scriptsize$\pm1.97$} & $16.59${\scriptsize$\pm1.79$} & $1.60${\scriptsize$\pm0.26$} & $0.03${\scriptsize$\pm0.38$} \\
 &  & rand & \underline{$58.86${\scriptsize$\pm0.35$}} & $54.94${\scriptsize$\pm2.42$} & $50.81${\scriptsize$\pm3.69$} & $6.61${\scriptsize$\pm0.75$} & $2.92${\scriptsize$\pm0.84$} \\
 &  & randn & \underline{$41.21${\scriptsize$\pm4.13$}} & $33.06${\scriptsize$\pm2.94$} & $10.34${\scriptsize$\pm2.24$} & $1.09${\scriptsize$\pm0.20$} & $-0.03${\scriptsize$\pm0.20$} \\
 &  & randd & \underline{$43.32${\scriptsize$\pm5.30$}} & $34.97${\scriptsize$\pm4.00$} & $14.41${\scriptsize$\pm4.03$} & $1.29${\scriptsize$\pm0.25$} & $0.05${\scriptsize$\pm0.26$} \\
\cmidrule(lr){2-8}
 & \multirow[c]{6}{*}{\shortstack{GCN\\[0.2em]($86.96${\scriptsize$\pm0.28$})}} & zeros & \underline{$39.01${\scriptsize$\pm7.51$}} & $30.77${\scriptsize$\pm6.00$} & $10.18${\scriptsize$\pm4.79$} & $0.96${\scriptsize$\pm0.34$} & $0.02${\scriptsize$\pm0.33$} \\
 &  & avg & \underline{$42.25${\scriptsize$\pm7.26$}} & $33.60${\scriptsize$\pm6.01$} & $14.19${\scriptsize$\pm5.84$} & $1.30${\scriptsize$\pm0.45$} & $0.05${\scriptsize$\pm0.33$} \\
 &  & randsamp & \underline{$41.65${\scriptsize$\pm8.13$}} & $33.39${\scriptsize$\pm6.53$} & $14.41${\scriptsize$\pm4.42$} & $1.27${\scriptsize$\pm0.38$} & $0.02${\scriptsize$\pm0.23$} \\
 &  & rand & \underline{$58.07${\scriptsize$\pm1.61$}} & $52.83${\scriptsize$\pm4.25$} & $47.66${\scriptsize$\pm6.68$} & $6.97${\scriptsize$\pm1.29$} & $3.05${\scriptsize$\pm1.02$} \\
 &  & randn & \underline{$39.54${\scriptsize$\pm7.12$}} & $31.07${\scriptsize$\pm5.71$} & $10.04${\scriptsize$\pm4.09$} & $1.01${\scriptsize$\pm0.38$} & $0.10${\scriptsize$\pm0.30$} \\
 &  & randd & \underline{$44.66${\scriptsize$\pm6.63$}} & $35.56${\scriptsize$\pm5.57$} & $16.36${\scriptsize$\pm5.58$} & $1.45${\scriptsize$\pm0.45$} & $-0.07${\scriptsize$\pm0.28$} \\
\cmidrule(lr){2-8}
 & \multirow[c]{6}{*}{\shortstack{GIN\\[0.2em]($82.19${\scriptsize$\pm3.45$})}} & zeros & \underline{$50.25${\scriptsize$\pm5.92$}} & $49.10${\scriptsize$\pm3.26$} & $43.09${\scriptsize$\pm5.58$} & $2.22${\scriptsize$\pm3.75$} & $-0.13${\scriptsize$\pm4.06$} \\
 &  & avg & \underline{$49.42${\scriptsize$\pm5.25$}} & $49.09${\scriptsize$\pm3.23$} & $43.85${\scriptsize$\pm5.45$} & $5.75${\scriptsize$\pm4.52$} & $0.25${\scriptsize$\pm4.54$} \\
 &  & randsamp & \underline{$54.78${\scriptsize$\pm0.80$}} & $48.27${\scriptsize$\pm7.22$} & $42.10${\scriptsize$\pm10.34$} & $5.77${\scriptsize$\pm3.65$} & $0.25${\scriptsize$\pm1.24$} \\
 &  & rand & $49.76${\scriptsize$\pm7.00$} & \underline{$51.95${\scriptsize$\pm3.45$}} & $50.79${\scriptsize$\pm3.73$} & $25.59${\scriptsize$\pm5.74$} & $14.15${\scriptsize$\pm1.86$} \\
 &  & randn & $48.12${\scriptsize$\pm5.59$} & \underline{$48.52${\scriptsize$\pm4.17$}} & $40.58${\scriptsize$\pm5.71$} & $2.25${\scriptsize$\pm4.10$} & $0.01${\scriptsize$\pm4.23$} \\
 &  & randd & \underline{$52.67${\scriptsize$\pm5.75$}} & $51.52${\scriptsize$\pm6.27$} & $43.90${\scriptsize$\pm10.07$} & $6.48${\scriptsize$\pm4.85$} & $0.49${\scriptsize$\pm1.02$} \\
\cmidrule(lr){2-8}
 & \multirow[c]{6}{*}{\shortstack{SAGE\\[0.2em]($87.42${\scriptsize$\pm0.31$})}} & zeros & $3.11${\scriptsize$\pm0.82$} & \underline{$3.60${\scriptsize$\pm0.71$}} & $1.87${\scriptsize$\pm0.39$} & $0.28${\scriptsize$\pm0.30$} & $0.13${\scriptsize$\pm0.31$} \\
 &  & avg & $8.08${\scriptsize$\pm1.54$} & \underline{$9.14${\scriptsize$\pm1.63$}} & $4.96${\scriptsize$\pm1.18$} & $0.60${\scriptsize$\pm0.50$} & $0.49${\scriptsize$\pm0.34$} \\
 &  & randsamp & $8.55${\scriptsize$\pm1.44$} & \underline{$9.55${\scriptsize$\pm1.75$}} & $5.43${\scriptsize$\pm1.37$} & $0.53${\scriptsize$\pm0.37$} & $0.49${\scriptsize$\pm0.43$} \\
 &  & rand & \underline{$54.52${\scriptsize$\pm2.44$}} & $52.48${\scriptsize$\pm2.90$} & $49.11${\scriptsize$\pm4.74$} & $7.02${\scriptsize$\pm1.13$} & $10.49${\scriptsize$\pm1.23$} \\
 &  & randn & $3.18${\scriptsize$\pm0.58$} & \underline{$3.79${\scriptsize$\pm0.72$}} & $1.99${\scriptsize$\pm0.46$} & $0.27${\scriptsize$\pm0.31$} & $0.16${\scriptsize$\pm0.24$} \\
 &  & randd & $7.44${\scriptsize$\pm1.25$} & \underline{$8.73${\scriptsize$\pm1.09$}} & $4.70${\scriptsize$\pm1.08$} & $0.47${\scriptsize$\pm0.43$} & $0.46${\scriptsize$\pm0.42$} \\
\midrule
\multirow[c]{24}{*}{\vspace{-1.5em}Citeseer} & \multirow[c]{6}{*}{\shortstack{SGC\\[0.2em]($73.08${\scriptsize$\pm0.37$})}} & zeros & \underline{$8.88${\scriptsize$\pm3.47$}} & $6.37${\scriptsize$\pm2.14$} & $1.25${\scriptsize$\pm0.44$} & $0.15${\scriptsize$\pm0.47$} & $0.10${\scriptsize$\pm0.45$} \\
 &  & avg & \underline{$6.60${\scriptsize$\pm2.12$}} & $5.20${\scriptsize$\pm1.78$} & $1.14${\scriptsize$\pm0.56$} & $0.66${\scriptsize$\pm0.35$} & $-0.17${\scriptsize$\pm0.28$} \\
 &  & randsamp & \underline{$7.37${\scriptsize$\pm3.71$}} & $5.29${\scriptsize$\pm2.54$} & $1.10${\scriptsize$\pm0.97$} & $0.57${\scriptsize$\pm0.28$} & $-0.22${\scriptsize$\pm0.24$} \\
 &  & rand & \underline{$15.67${\scriptsize$\pm6.05$}} & $11.96${\scriptsize$\pm4.56$} & $6.77${\scriptsize$\pm3.15$} & $1.35${\scriptsize$\pm0.61$} & $-0.19${\scriptsize$\pm0.30$} \\
 &  & randn & \underline{$5.36${\scriptsize$\pm1.99$}} & $4.31${\scriptsize$\pm1.53$} & $0.93${\scriptsize$\pm0.62$} & $0.63${\scriptsize$\pm0.40$} & $-0.16${\scriptsize$\pm0.30$} \\
 &  & randd & \underline{$9.27${\scriptsize$\pm5.24$}} & $6.98${\scriptsize$\pm4.01$} & $1.43${\scriptsize$\pm1.39$} & $0.66${\scriptsize$\pm0.48$} & $-0.22${\scriptsize$\pm0.40$} \\
\cmidrule(lr){2-8}
 & \multirow[c]{6}{*}{\shortstack{GCN\\[0.2em]($73.13${\scriptsize$\pm0.36$})}} & zeros & \underline{$10.08${\scriptsize$\pm4.19$}} & $7.93${\scriptsize$\pm2.67$} & $1.89${\scriptsize$\pm0.96$} & $0.45${\scriptsize$\pm0.31$} & $-0.19${\scriptsize$\pm0.33$} \\
 &  & avg & \underline{$10.83${\scriptsize$\pm4.49$}} & $8.52${\scriptsize$\pm2.89$} & $2.25${\scriptsize$\pm1.18$} & $0.52${\scriptsize$\pm0.32$} & $-0.20${\scriptsize$\pm0.32$} \\
 &  & randsamp & \underline{$13.84${\scriptsize$\pm11.10$}} & $11.27${\scriptsize$\pm8.58$} & $3.88${\scriptsize$\pm3.73$} & $0.67${\scriptsize$\pm0.48$} & $-0.17${\scriptsize$\pm0.40$} \\
 &  & rand & \underline{$20.28${\scriptsize$\pm7.26$}} & $15.76${\scriptsize$\pm5.69$} & $8.38${\scriptsize$\pm4.04$} & $1.24${\scriptsize$\pm0.66$} & $-0.17${\scriptsize$\pm0.41$} \\
 &  & randn & \underline{$10.35${\scriptsize$\pm4.28$}} & $7.92${\scriptsize$\pm2.79$} & $1.82${\scriptsize$\pm0.92$} & $0.49${\scriptsize$\pm0.39$} & $-0.19${\scriptsize$\pm0.47$} \\
 &  & randd & \underline{$14.32${\scriptsize$\pm8.06$}} & $11.47${\scriptsize$\pm5.95$} & $3.82${\scriptsize$\pm2.54$} & $0.75${\scriptsize$\pm0.36$} & $-0.22${\scriptsize$\pm0.23$} \\
\cmidrule(lr){2-8}
 & \multirow[c]{6}{*}{\shortstack{GIN\\[0.2em]($68.96${\scriptsize$\pm1.05$})}} & zeros & $44.49${\scriptsize$\pm11.31$} & \underline{$45.25${\scriptsize$\pm6.25$}} & $40.67${\scriptsize$\pm11.70$} & $2.98${\scriptsize$\pm2.01$} & $-0.55${\scriptsize$\pm1.19$} \\
 &  & avg & $44.14${\scriptsize$\pm11.49$} & \underline{$45.14${\scriptsize$\pm6.44$}} & $40.66${\scriptsize$\pm11.74$} & $4.86${\scriptsize$\pm2.91$} & $-0.06${\scriptsize$\pm1.58$} \\
 &  & randsamp & \underline{$45.92${\scriptsize$\pm6.94$}} & $45.17${\scriptsize$\pm10.41$} & $42.36${\scriptsize$\pm12.81$} & $4.46${\scriptsize$\pm2.31$} & $-0.23${\scriptsize$\pm0.45$} \\
 &  & rand & $45.01${\scriptsize$\pm6.75$} & \underline{$46.27${\scriptsize$\pm7.80$}} & $43.39${\scriptsize$\pm11.69$} & $12.58${\scriptsize$\pm5.53$} & $2.95${\scriptsize$\pm4.13$} \\
 &  & randn & $41.97${\scriptsize$\pm7.78$} & \underline{$44.24${\scriptsize$\pm6.79$}} & $41.79${\scriptsize$\pm9.63$} & $2.75${\scriptsize$\pm1.81$} & $-0.57${\scriptsize$\pm0.90$} \\
 &  & randd & $44.49${\scriptsize$\pm8.25$} & \underline{$49.51${\scriptsize$\pm6.73$}} & $45.87${\scriptsize$\pm8.89$} & $4.92${\scriptsize$\pm3.97$} & $-0.41${\scriptsize$\pm0.94$} \\
\cmidrule(lr){2-8}
 & \multirow[c]{6}{*}{\shortstack{SAGE\\[0.2em]($73.98${\scriptsize$\pm0.40$})}} & zeros & $0.48${\scriptsize$\pm0.47$} & \underline{$0.89${\scriptsize$\pm0.53$}} & $0.37${\scriptsize$\pm0.35$} & $-0.08${\scriptsize$\pm0.29$} & $-0.16${\scriptsize$\pm0.39$} \\
 &  & avg & $1.32${\scriptsize$\pm0.72$} & \underline{$2.02${\scriptsize$\pm0.91$}} & $0.98${\scriptsize$\pm0.51$} & $-0.07${\scriptsize$\pm0.27$} & $-0.16${\scriptsize$\pm0.39$} \\
 &  & randsamp & $1.36${\scriptsize$\pm0.81$} & \underline{$2.22${\scriptsize$\pm0.77$}} & $1.16${\scriptsize$\pm0.55$} & $0.01${\scriptsize$\pm0.34$} & $-0.10${\scriptsize$\pm0.46$} \\
 &  & rand & \underline{$14.12${\scriptsize$\pm6.89$}} & $14.03${\scriptsize$\pm6.02$} & $9.19${\scriptsize$\pm5.32$} & $0.94${\scriptsize$\pm0.67$} & $-0.16${\scriptsize$\pm0.44$} \\
 &  & randn & $0.62${\scriptsize$\pm0.51$} & \underline{$0.94${\scriptsize$\pm0.49$}} & $0.35${\scriptsize$\pm0.48$} & $-0.08${\scriptsize$\pm0.36$} & $-0.12${\scriptsize$\pm0.46$} \\
 &  & randd & $0.96${\scriptsize$\pm0.62$} & \underline{$1.60${\scriptsize$\pm0.87$}} & $0.74${\scriptsize$\pm0.61$} & $0.04${\scriptsize$\pm0.47$} & $-0.11${\scriptsize$\pm0.47$} \\
\bottomrule
\end{tabular}
\label{tab:heuristics_vs_xtilde_full}
\end{table*}